\newcommand{\gap}{\,\,\,\,\,\,\,\,}  
\mathchardef\mhyphen="2D
\newcommand\norm[1]{\left\lVert#1\right\rVert}
\def\1{\bm{1}}
\newcommand{\real}{\mathbb{R}}
\definecolor{titlepagecolor}{cmyk}{75,68,67,90}
\definecolor{titlepagecolor2}{rgb}{1.0, 0.08, 0.58}
\definecolor{emerald}{rgb}{0.31, 0.78, 0.47}
\definecolor{deeppink}{HTML}{D14064}
\definecolor{lowpink}{HTML}{ffe6ec}
\definecolor{lowblue}{HTML}{E1EBFE}
\let\oldforall\forall
\renewcommand{\forall}{\oldforall\, }
\definecolor{mylightbluetitle}{RGB}{60,113,183}
\definecolor{structurecolorblue}{RGB}{60,113,183}
\definecolor{structurecolorgreen}{RGB}{63,145,182}
\colorlet{structurecolor}{structurecolorblue}
\definecolor{structurecolorelegant}{RGB}{60,113,183}
\definecolor{structurecolorlt}{RGB}{31,119,185}
\definecolor{structurecolorHighTheoremBlue}{RGB}{220,227,248}
\definecolor{structurecolorHighTheoremGreen}{RGB}{188,222,231}
\colorlet{structurecolorHighTheorem}{structurecolorHighTheoremBlue}
\newcommand{\mdframecolorBox}{gray!15}  
\definecolor{winestain}{rgb}{0.5,0,0}
\definecolor{mydarkblue}{rgb}{0,0.08,0.45}
\definecolor{mydarkred}{rgb}{0.70,0.00,0.00}
\definecolor{mydarkgreen}{rgb}{0.00,0.30,0.00}
\definecolor{mydarkyellow}{RGB}{197,151,13}
\definecolor{mydarkpurple}{RGB}{149,18,192}
\definecolor{mydarkgray}{RGB}{64,64,64}
\definecolor{color0}  {RGB}{174,225,254} 
\definecolor{color1}  {RGB}{220,227,248} 
\definecolor{color2}  {RGB}{28,130,185} 
\definecolor{color3}  {RGB}{255,253,250} 
\definecolor{colormiddleright}  {RGB}{245,253,250} 
\definecolor{colorbottomleft}  {RGB}{255,243,250} 
\definecolor{coloruppermiddle}  {RGB}{255,253,230} 
\definecolor{colormiddleleft}  {RGB}{255,244,237}
\definecolor{colorcr}  {RGB}{249,253,232} 
\definecolor{colorreduction}  {RGB}{255,235,254} 
\definecolor{colorqr}  {RGB}{254,221,199} 
\definecolor{colorbiconjugate}  {RGB}{251,149,161} 
\definecolor{colorsvd}  {RGB}{215,247,235} 
\definecolor{colorupperright}  {RGB}{239,246,251} 
\definecolor{colorspectral}  {RGB}{206,226,243} 
\definecolor{colorbottomright}  {RGB}{220,224,236} 
\definecolor{coloreigenvalue}  {RGB}{197,203,224} 
\definecolor{colorcp} {RGB}{217, 234, 186} 
\definecolor{colorcpborder} {RGB}{233, 243, 216} 
\definecolor{colorupperleft}  {RGB}{235,243,240} 
\definecolor{colorsemidefinite}  {RGB}{217,232,226} 
\definecolor{colormiddle} {RGB}{235, 240,255}
\definecolor{colorlu}  {RGB}{220,227,255} 
\definecolor{colorals}  {RGB}{240,230,255} 
\definecolor{coloralsbkg}  {RGB}{248,243,255} 
\definecolor{canaryyellow}{rgb}{1.0, 0.75, 0.0}
\definecolor{bluepigment}{rgb}{0.0, 0.0, 1.0}
\definecolor{canarypurple}{RGB}{208, 13, 241}
\definecolor{colorGreenOcre}{RGB}{51,102,0} 
\definecolor{colorBlue2}{RGB}{200,207,248}
\definecolor{shadecolor}{gray}{0.75}
\newcommand{\hadaprod}{\circ}
\newcommand{\cspace}{\mathcal{C}}
\newcommand{\nspace}{\mathcal{N}}
\newcommand{\bzero}{\mathbf{0}}
\newcommand{\diag}{\mathrm{diag}}
\newcommand{\rank}{\mathrm{rank}}
\newcommand{\trace}{\mathrm{tr}}
\newcommand{\bSigma}{\boldsymbol\Sigma}
\newcommand{\widetildebA}{\widetilde{\bm{A}}}
\newcommand{\widetildebM}{\widetilde{\bm{M}}}
\newcommand{\widetildebW}{\widetilde{\bm{W}}}
\newcommand{\widetildebZ}{\widetilde{\bm{Z}}}
\newcommand{\widetildeba}{\widetilde{\bm{a}}}
\newcommand{\widetildebb}{\widetilde{\bm{b}}}
\newcommand{\widetildebc}{\widetilde{\bm{c}}}
\newcommand{\widetildebw}{\widetilde{\bm{w}}}
\newcommand{\widetildebz}{\widetilde{\bm{z}}}
\newcommand{\ba}{\bm{a}}
\newcommand{\bA}{\bm{A}}
\newcommand{\bb}{\bm{b}}
\newcommand{\bB}{\bm{B}}
\newcommand{\bc}{\bm{c}}
\newcommand{\bC}{\bm{C}}  
\newcommand{\bd}{\bm{d}}
\newcommand{\bD}{\bm{D}}
\newcommand{\be}{\bm{e}}
\newcommand{\bE}{\bm{E}}
\newcommand{\bI}{\bm{I}}
\newcommand{\bK}{\bm{K}}
\newcommand{\bM}{\bm{M}}
\newcommand{\bo}{\bm{o}}
\newcommand{\bp}{\bm{p}}
\newcommand{\bP}{\bm{P}}
\newcommand{\bQ}{\bm{Q}}
\newcommand{\bS}{\bm{S}}
\newcommand{\bT}{\bm{T}}
\newcommand{\bU}{\bm{U}}
\newcommand{\bV}{\bm{V}}
\newcommand{\bw}{\bm{w}}
\newcommand{\bW}{\bm{W}}
\newcommand{\bx}{\bm{x}}
\newcommand{\bX}{\bm{X}}
\newcommand{\by}{\bm{y}}
\newcommand{\bz}{\bm{z}}
\newcommand{\bZ}{\bm{Z}}
\DeclareMathAlphabet{\mathsfit}{\encodingdefault}{\sfdefault}{m}{sl}
\SetMathAlphabet{\mathsfit}{bold}{\encodingdefault}{\sfdefault}{bx}{n}
\def\sS{{\mathbb{S}}}
\definecolor{winestain}{rgb}{0.5,0,0}
\definecolor{ocre}{RGB}{51,102,0} 
\definecolor{colorBlue2}{RGB}{200,207,248}
\definecolor{mydarkblue}{rgb}{0,0.08,0.45}
\definecolor{mylightbluetext}{rgb}{0,0.08,0.45}
\title{Low-Rank Approximation, Adaptation, and Other Tales}
\author{
Jun Lu 
\\
\texttt{jun.lu.locky@gmail.com} \\
}
\newtheorem{theorem}{Theorem}
\newtheorem{definition}{Definition}
\newtheorem{lemma}{Lemma}
\newtheorem{remark}{Remark}
\newtheorem{proof}{Proof}
\newcommand{\BlackBox}{\rule{1.5ex}{1.5ex}} 
\renewenvironment{proof}{\par\noindent{\bf Proof\ }}{\hfill\BlackBox\\[2mm]}
\begin{document}

\maketitle

\begin{abstract}
Low-rank approximation is a fundamental technique in modern data analysis, widely utilized across various fields such as signal processing, machine learning, and natural language processing. Despite its ubiquity, the mechanics of low-rank approximation and its application in adaptation can sometimes be obscure, leaving practitioners and researchers with questions about its true capabilities and limitations. This paper seeks to clarify low-rank approximation and adaptation by offering a comprehensive guide that reveals their inner workings and explains their utility in a clear and accessible way.
Our focus here is to develop a solid intuition for how low-rank approximation and adaptation operate, and why they are so effective. We begin with basic concepts and gradually build up to the mathematical underpinnings, ensuring that readers of all backgrounds can gain a deeper understanding of low-rank approximation and adaptation. We strive to strike a balance between informal explanations and rigorous mathematics, ensuring that both newcomers and experienced experts can benefit from this survey.
Additionally, we introduce new low-rank decomposition and adaptation algorithms that have not yet been explored in the field, hoping that future researchers will investigate their potential applicability.

\paragraph{Keywords:} Alternating least squares (ALS), Hadamard decomposition, Kronecker decomposition, Khatri-Rao decomposition, Low-rank adaptation (LoRA), Low-rank adaptation with special matrix products.
\end{abstract}

\section{Introduction}

The rapid advancement in sensor technology and computer hardware has led to a significant increase in the volume of data being generated, presenting new challenges for data analysis. This vast amount of data often contains noise and other distortions, necessitating preprocessing steps to make it suitable for scientific analysis. For instance, signals received by antenna arrays are frequently contaminated by noise and other degradations.
On the other hand, although we would like to analyze the data at higher frequencies, the computational requirements are not always met.
To effectively analyze the data, it is necessary to reconstruct or represent it in a way that reduces inaccuracies while maintaining certain feasibility conditions.

Additionally, in many scenarios, the data collected from complex systems is the result of numerous interrelated variables working together. When these variables are not clearly defined, the information in the original data can be ambiguous and overlapping. By creating a simplified model that captures the essential dynamics of the system, we can achieve a level of accuracy that closely matches the original system. A common approach to simplifying the data involves replacing the original data with a lower-dimensional representation obtained through subspace approximation, which helps in removing noise, reducing the complexity of the model, and ensuring feasibility.

Low-rank approximations or low-rank matrix decompositions are critical in a wide range of applications because they enable us to represent a given matrix as the product of two or more matrices with fewer dimensions. These techniques help capture the fundamental structure of a matrix while filtering out noise and redundancies. Common methods for low-rank matrix decomposition include singular value decomposition (SVD), principal component analysis (PCA),  multiplicative update nonnegative matrix factorization (NMF), and the alternating least squares (ALS) approach.

An illustrative example of the application of low-rank matrix decomposition is found in collaborative filtering tasks, such as the Netflix Prize competition \citep{bennett2007netflix}, where the objective is to predict user ratings for movies based on their existing ratings for other movies. Let's denote the ratings of the  $n$-th user for the $m$-th movie by $a_{mn}$. We can define an $M\times N$ rating matrix $\bA$ (also known as the \textit{preference matrix}) with columns  $\ba_n$ ($n\in\{1,2,\ldots, N\}$) representing the ratings provided by the $n$-th user. Many of the ratings $\{a_{mn}\}$ are missing, and our goal is to accurately predict these missing ratings, i.e., to complete the matrix (known as the \textit{matrix completion} problem). Without assuming some underlying structure in the matrix, there would be no relationship between the observed and unobserved entries, making the prediction task impossible \citep{jain2017non, lu2022flexible}. 
Therefore, it is crucial to impose some structure on the matrix. A common structural assumption is that the matrix is of low rank, which makes the problem well-posed and allows for a unique solution.
Consequently, the unobserved entries can no longer be independent of the observed values. \footnote{However, this is actually a strong assumption. Consider the matrix $\bA=\sum_{i=1}^{r} \be_i\widetilde{\be}_j^\top$, where $\be_i$ is the standard basis for $\real^M$ and $\widetilde{\be}_j$ is the standard basis for $\real^N$. $\bA$ is a rank-$r$ matrix and contains only $r$ nonzero entries.  
In recommendation settings, we typically observe only a few random entries of the matrix. As a result, there is a high possibility  that some entries may never be observed. This poses a significant challenge for the matrix completion problem.
However, we will not cover this topic in this paper.}

The low-rank matrix completion problem can be formulated as finding the matrix $\widetilde{\bA}$ that minimizes the squared error between the observed entries of  $\bA$ and $\widetilde{\bA}$, subject to the constraint that $\widetilde{\bA}$ is of rank $K$. 
Consider the mask matrix $\bM\in \{0,1\}^{M\times N}$, where $m_{mn}\in \{0,1\}$~\footnote{We denote the $(m,n)$-th component of matrix $\bB$ by $b_{mn}$.} means if the user $n$ has rated the movie $m$ or not.
Then the low-rank matrix completion problem can be formulated as 
$$
\widetilde{\bA} = \mathop{\arg\min}_{\bX\in\real^{M\times N}} \sum_{m,n=0}^{M,N} (x_{mn} - a_{mn})^2\cdot m_{mn}\gap \text{s.t.} \gap \rank(\bX)\leq K.
$$
This problem is NP-hard \citep{hardt2014computational} but can be reformulated in an unconstrained form by factoring $\widetilde{\bA}$ into two matrices $\bW$ and $\bZ$ of appropriate dimensions:
$$
\widetilde{\bA} = \mathop{\arg\min}_{\substack{\bW\in\real^{M\times K}\\ \bZ\in\real^{K\times N}}} \sum_{m,n=0}^{M,N} ((\bW\bZ)_{mn}- a_{mn})^2\cdot m_{mn},
$$
which can then be optimized indirectly using an alternating algorithm.

On the other hand, low-rank approximation has gained significant attention recently due to its application in fine-tuning pre-trained large language models (LLMs). Large language models have become a cornerstone of modern natural language processing (NLP) and have driven substantial advancements in the field. The use of neural networks for language modeling dates back to the early 2000s, but the rise of LLMs began with the advent of deep learning and the introduction of the transformer architecture.
Prior to this, recurrent neural networks (RNNs) and long short-term memory (LSTM) networks were the dominant architectures for sequence modeling.

The transformer architecture, introduced in 2017 by  \citet{vaswani2017attention}, represented a significant shift away from RNNs. Transformers employ self-attention mechanisms to process input sequences in parallel, which makes them highly efficient for large-scale training. This architecture quickly became the basis for many state-of-the-art models.

The pre-training paradigm consists of initially training a model on a vast text corpus to learn general linguistic patterns, which is then fine-tuned for specific tasks (called downstream tasks). This approach gained widespread attention with the introduction of bidirectional encoder representations from transformers (BERT) in 2018 \citep{devlin2018bert}, setting new benchmarks in various NLP tasks. BERT and similar models leverage the transformer architecture to obtain contextual representations of words.

Building on the success of BERT, researchers started scaling up the size of pre-trained models. GPT-2 (Generative pre-trained transformer 2) and RoBERTa (robustly optimized BERT pre-training approach) were among the first models that demonstrated how scaling up model size could enhance performance. GPT-3, released in 2020, pushed this trend to new heights with its 175 billion parameters \citep{brown2020language}, showcasing impressive capabilities in \textit{few-shot learning}, where the model can handle tasks with minimal additional training.

Recent advancements in large language models have emphasized enhancing efficiency and scalability. As models expanded, the computational and storage demands for fine-tuning became increasingly costly. This prompted the creation of parameter-efficient tuning (PET) techniques, designed to adapt pre-trained models to new tasks with minimal alterations to their parameters. Methods such as LoRA (low-rank adaptation) and adapters are notable examples of PET approaches \citep{hu2021lora, houlsby2019parameter}.

Adapters are small neural networks placed between the layers of a pre-trained model, fine-tuned for specific tasks while keeping the rest of the model unchanged/frozen. They have proven effective in transferring knowledge from pre-training to downstream tasks.

LoRA  is a technique that employs low-rank approximation during the fine-tuning of LLMs. It keeps the pre-trained model weights frozen and injects trainable rank decomposition matrices into each layer of the transformer architecture. This approach significantly reduces the number of trainable parameters and the GPU memory required, making it more efficient than full fine-tuning. Moreover, LoRA does not introduce additional latency during inference. Once fine-tuning is complete, the trainable matrices can be merged with the frozen weights, eliminating the need for extra computations during inference.

The rest of this paper is organized as follows. In Section~\ref{sec:als} we present the low-rank decomposition model and give a more formal 
description of alternating least squares algorithm for obtaining low-rank approximations.
In Section~\ref{sec:special} we describe special matrix products; their properties are considered. 
The low-rank Hadamard, Kronecker, and the variant Khatri-Rao decompositions are consider in Section~\ref{section:low_rank_hadamard}, \ref{section:low_rank_kronecker}, \ref{section:lrank_khatri_decom}; while Section~\ref{sec:lora_in_llm} describes the application of low-rank matrix decomposition in fine-tuning large language models.

\section{Low-Rank Decomposition via Alternating Least Squares}\label{sec:als}

We then formally examine algorithms designed to address the following problem: The matrix $\bA$ is approximately factorized into an $M\times K$ matrix $\bW$ and a $K \times  N$ matrix $\bZ$. 
Typically, $K$ is selected to be smaller than both $M$ and $N$, thereby  ensuring that $\bW$ and $\bZ$ have reduced dimensions compared to the original  matrix $\bA$. 
This reduction in dimensionality yields a compressed representation  of the original data matrix. 
An appropriate decision on the value of $K$ is crucial in practice, although this choice often depends on the specific problem at hand.
The significance of the factorization lies in the fact that if we denote the column partitions of $\bA$ and $\bZ$ as $\bA=[\ba_1, \ba_2, \ldots, \ba_N]$ and $\bZ=[\bz_1, \bz_2, \ldots, \bz_N]$, respectively, then each column $\ba_n$ can be approximated as $\ba_n = \bW\bz_n$. In other words, each column $\ba_n$ is approximated as a linear combination of the columns of $\bW$, with the coefficients given by the corresponding elements in $\bz_n$. 
Consequently, the columns of $\bW$ can be considered as forming the column basis of $\bA$.

To achieve the approximation $\bA\approx\bW\bZ$, we need to define a loss function that quantifies the discrepancy between $\bA$ and $\bW\bZ$. 
In our context, the chosen loss function is the Frobenius norm of the difference between the two matrices, which vanishes to zero if $\bA=\bW\bZ$. The benefits of this choice will become clear soon.

To simplify the problem, let's first assume that there are no missing ratings. 
Our goal is to project data vectors $\ba_n\in\real^M$ into a lower dimension $\bz_n \in \real^K$, where $K<\min\{M, N\}$,
in such a way that the \textit{reconstruction error} (a.k.a., objective function, cost function, or loss function) as measured by the Frobenius norm is minimized (assume $K$ is known):
\begin{equation}\label{equation:als-per-example-loss}
	\mathop{\min}_{\bW,\bZ}  \sum_{n=1}^N \sum_{m=1}^{M} \left(a_{mn} - \bw_m^\top\bz_n\right)^2,
\end{equation}
where $\bW=[\bw_1^\top; \bw_2^\top; \ldots; \bw_M^\top]\in \real^{M\times K}$ and $\bZ=[\bz_1, \bz_2, \ldots, \bz_N] \in \real^{K\times N}$ containing $\bw_m$'s and $\bz_n$'s as \textbf{rows and columns}, respectively. The loss formulation in Equation~\eqref{equation:als-per-example-loss} is referred to as the \textit{per-example loss}. 
And it can be equivalently expressed as
$$
L(\bW,\bZ) = \sum_{n=1}^N \sum_{m=1}^{M} \left(a_{mn} - \bw_m^\top\bz_n\right)^2 
= \norm{\bW\bZ-\bA}_F^2.
$$ 
Moreover, the loss function $L(\bW,\bZ)=\sum_{n=1}^N \sum_{m=1}^{M} \left(a_{mn} - \bw_m^\top\bz_n\right)$ exhibits convexity with respect to $\bZ$ when $\bW$ is held constant, and similarly, with respect to $\bW$ when $\bZ$ is fixed. 
This property   motivates the alternate algorithm that alternately fixes one of the variables and optimizes over the other.
Consequently, we can first minimize it with respect to $\bZ$ while keeping $\bW$ fixed, and subsequently minimize it with respect to $\bW$ with $\bZ$ fixed.
This approach leads to two optimization problems, which we will refer to as ALS1 and ALS2, respectively:
$$
\left\{
\begin{aligned}
	\bZ &\leftarrow \mathop{\arg \min}_{\bZ} L(\bW,\bZ);    \qquad \text{(ALS1)} \\ 
	\bW &\leftarrow \mathop{\arg \min}_{\bW} L(\bW,\bZ). \qquad \text{(ALS2)}
\end{aligned}
\right.
$$
This method is known as the \textit{coordinate descent algorithm}, where we alternate between optimizing the least squares with respect to $\bW$ and $\bZ$. 
Hence, it is also called the \textit{alternating least squares (ALS)} algorithm \citep{comon2009tensor, takacs2012alternating, giampouras2018alternating}. Convergence is guaranteed if the loss function $L(\bW,\bZ)$ decreases at each iteration, and we will delve further into this topic later.

\begin{remark}[Convexity and Global Minimum]
Although the loss function defined by Frobenius norm $\norm{\bW\bZ-\bA}^2$ is convex either with respect to  $\bW$ when $\bZ$ is fixed or vice versa (called \textit{marginally convex}), it is not \textit{jointly convex} in both variables simultaneously. 
As a result, finding the global minimum is infeasible. 
However, the convergence to a local minimum is assured.
\end{remark}

\subsection*{Given $\bW$, Optimizing $\bZ$}
Now, let's consider the problem of $\bZ \leftarrow \mathop{\arg \min}_{\bZ} L(\bW,\bZ)$. When there exists a unique minimum of the loss function $L(\bW,\bZ)$ with respect to $\bZ$, we refer to it as the \textit{least squares} minimizer of $\mathop{\arg \min}_{\bZ} L(\bW,\bZ)$. 
With $\bW$ fixed, $L(\bW,\bZ)$  can be denoted  as $L(\bZ|\bW)$ (or more compactly, as $L(\bZ)$) to emphasize  its  dependence on $\bZ$:
$$
\begin{aligned}
	L(\bZ|\bW) &= \norm{\bW\bZ-\bA}_F^2= \left\Vert\bW[\bz_1,\bz_2,\ldots, \bz_N]-[\ba_1,\ba_2,\ldots,\ba_N]\right\Vert^2=\left\Vert
	\begin{bmatrix}
		\bW\bz_1 - \ba_1 \\
		\bW\bz_2 - \ba_2\\
		\vdots \\
		\bW\bz_N - \ba_N
	\end{bmatrix}
	\right\Vert^2. 
\end{aligned}
$$
Now, if we define 
$$
\widetildebW = 
\begin{bmatrix}
	\bW & \bzero & \ldots & \bzero\\
	\bzero & \bW & \ldots & \bzero\\
	\vdots & \vdots & \ddots & \vdots \\
	\bzero & \bzero & \ldots & \bW
\end{bmatrix}
\in \real^{MN\times KN}, 
\gap 
\widetildebz=
\begin{bmatrix}
	\bz_1 \\ \bz_2 \\ \vdots \\ \bz_N
\end{bmatrix}
\in \real^{KN},
\gap 
\widetildeba=
\begin{bmatrix}
	\ba_1 \\ \ba_2 \\ \vdots \\ \ba_N
\end{bmatrix}
\in \real^{MN},
$$
then the (ALS1) problem can be reduced to the ordinary least squares (OLS)  problem of minimizing  $\norm{\widetildebW \widetildebz - \widetildeba}^2$ with respect to $\widetildebz$. The solution is given by 
$$
\widetildebz = (\widetildebW^\top\widetildebW)^{-1} \widetildebW^\top\widetildeba.
$$
However, it is not advisable  to obtain the result using this approach, as computing the inverse of  $\widetildebW^\top\widetildebW$ requires $2(KN)^3$ flops (floating-point operations; see, for example, Chapter 2 of \citet{lu2021numerical}).
Alternatively, a more efficient way to solve the problem of (ALS1) is to find the gradient of $L(\bZ|\bW)$ with respect to $\bZ$ (assuming all the partial derivatives of this function exist):
\begin{equation}\label{equation:givenw-update-z-allgd}
\begin{aligned}
\nabla L(\bZ|\bW) &= 
\frac{\partial \,\,\trace\left((\bW\bZ-\bA)(\bW\bZ-\bA)^\top\right)}{\partial \bZ}\\
&=\frac{\partial \,\,\trace\left((\bW\bZ-\bA)(\bW\bZ-\bA)^\top\right)}{\partial (\bW\bZ-\bA)}
\frac{\partial (\bW\bZ-\bA)}{\partial \bZ}\\
&\stackrel{\star}{=}2  \bW^\top(\bW\bZ-\bA) \in \real^{K\times N},
\end{aligned}
\end{equation}
where the first equality arises from the definition of the Frobenius norm  such that $\norm{\bA} = \sqrt{\sum_{m=1,n=1}^{M,N} (a_{mn})^2}=\sqrt{\trace(\bA\bA^\top)}$, and equality ($\star$) follows from the fact that $\frac{\partial \trace(\bA\bA^\top)}{\partial \bA} = 2\bA$. When the loss function is a differentiable function of $\bZ$, we can determine the least squares solution using differential calculus. 
Since we optimize over an open set $\real^{K\times N}$,  a minimum of the function 
$L(\bZ|\bW)$ must be a root of the equation:
$$
\nabla L(\bZ|\bW)  = \bzero.
$$
By finding the root of the  equation above, we obtain the ``candidate" update for $\bZ$ that identifies  the minimizer of $L(\bZ|\bW)$:
\begin{equation}\label{equation:als-z-update}
	{\bZ = (\bW^\top\bW)^{-1} \bW^\top \bA  \leftarrow \mathop{\arg \min}_{\bZ} L(\bZ|\bW).}
\end{equation}
This computation takes $2K^3$ flops to compute the inverse of $\bW^\top\bW$, in contrast to $2(KN)^3$ flops needed to compute the inverse  of $\widetildebW^\top\widetildebW$.
Before confirming that a root of the equation above is indeed a minimizer (as opposed to a maximizer, hence the term ``candidate" update), it is essential  to establish the convexity of the function. 
In  case where  the function is twice differentiable, this verification  can be equivalently achieved by confirming (see, for example, \citet{beck2014introduction, lu2021numerical}): 
$$
\nabla^2 L(\bZ|\bW) > 0.~
\footnote{In short, a twice continuously differentiable function $f$ over an open convex set $\sS$ is called \textit{convex} if and only if $\nabla^2f(\bx)\geq \bzero $ for any $\bx\in \sS$ (sufficient and necessary for convex); and called \textit{strictly convex} if $\nabla^2f(\bx)> \bzero$ for any $\bx\in \sS$ (only sufficient for strictly convex, e.g., $f(x)=x^6$ is strictly convex, but $f^{\prime\prime}(x)=30x^4$ is equal to zero at $x=0$.). 
	And when the convex function $f$ is a continuously differentiable function over a convex set $\sS$, the stationary point $\nabla f(\bx^\star)=\bzero$ of $\bx^\star\in\sS$ is  a \textit{global minimizer} of $f$ over $\sS$.
	In our context, when given $\bW$ and updating $\bZ$, the function is defined over the entire space $\real^{K\times N}$ (an open set).
}
$$
That is, the Hessian matrix is positive definite. To demonstrate this, we explicitly express the Hessian matrix as
\begin{equation}\label{equation:als-z-update_hessian}
	\nabla^2 L(\bZ|\bW)= 2\widetildebW^\top\widetildebW \in \real^{KN\times KN},
\end{equation}
which maintains full rank if $\bW\in \real^{M\times K}$ has full rank  and $K<M$.

\begin{remark}[Positive Definite Hessian if $\bW$ Has Full Rank]\label{remark:als_rmk1}
	We  claim that if $\bW\in\real^{M\times K}$ has full rank $K$ with $K<M$, then $\nabla^2 L(\bZ|\bW)$ is positive definite. This can be demonstrated by confirming that when $\bW$ has full rank, the equation $\bW\bx=\bzero$  holds true only when $\bx=\bzero$, since the null space of $\bW$ has a  dimension of 0. Therefore, 
	$$
	\bx^\top (2\bW^\top\bW)\bx >0, \qquad \text{for any nonzero vector $\bx\in \real^K$}.
	$$ 
\end{remark}
Now, the issue is that we need to confirm \textbf{whether $\bW$ has full rank so that the Hessian of $L(\bZ|\bW)$ is positive definite}; otherwise, we cannot claim that the update of $\bZ$ in Equation~\eqref{equation:als-z-update} reduces the loss (due to convexity) so that the matrix decomposition progressively improves the approximation of the original matrix $\bA$ by $\bW\bZ$ in each iteration.
We will address the positive definiteness of the Hessian matrix shortly, relying on the following lemma.

\begin{lemma}[Rank of $\bZ$ after Updating]\label{lemma:als-update-z-rank}
Suppose $\bA\in \real^{M\times N}$ has full rank with \textcolor{mylightbluetext}{$M\leq N$} and $\bW\in \real^{M\times K}$ has full rank with $K<M$ (i.e., $K<M\leq N$), then the update of $\bZ=(\bW^\top\bW)^{-1} \bW^\top \bA \in \real^{K\times N}$ in Equation~\eqref{equation:als-z-update} has full rank.
\end{lemma}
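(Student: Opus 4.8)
The plan is to show that $\bZ=(\bW^\top\bW)^{-1}\bW^\top\bA$ has full \emph{row} rank $K$ (note $K<N$, so ``full rank'' here means $\rank(\bZ)=K$), and to do this by verifying that the left null space of $\bZ$ is trivial. First I would check that the expression is even well defined: since $\bW\in\real^{M\times K}$ has full column rank $K$ (because $K<M$), the Gram matrix $\bW^\top\bW\in\real^{K\times K}$ is positive definite, exactly as argued in Remark~\ref{remark:als_rmk1}, hence invertible; so $(\bW^\top\bW)^{-1}\bW^\top\in\real^{K\times M}$ makes sense and $\bZ\in\real^{K\times N}$.

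Next, suppose $\bx\in\real^K$ satisfies $\bx^\top\bZ=\bzero^\top$, and set $\by^\top:=\bx^\top(\bW^\top\bW)^{-1}\bW^\top\in\real^{1\times M}$. Then $\by^\top\bA=\bx^\top\bZ=\bzero^\top$. This is the step where the hypothesis $M\leq N$ enters: because $\bA\in\real^{M\times N}$ has full rank and $M\leq N$, it has full \emph{row} rank $M$, so its left null space is $\{\bzero\}$; therefore $\by=\bzero$, i.e.\ $\bx^\top(\bW^\top\bW)^{-1}\bW^\top=\bzero^\top$. Right-multiplying this identity by $\bW$ gives $\bx^\top(\bW^\top\bW)^{-1}(\bW^\top\bW)=\bx^\top=\bzero^\top$, hence $\bx=\bzero$. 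So no nonzero vector lies in the left null space of $\bZ$, which means $\rank(\bZ)=K$.

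As a cross-check, the same conclusion also drops out in one line from Sylvester's rank inequality: writing $\bZ=\big((\bW^\top\bW)^{-1}\bW^\top\big)\bA$ as a product of a $K\times M$ matrix of rank $K$ with the $M\times N$ matrix $\bA$ of rank $M$, we get $\rank(\bZ)\geq K+M-M=K$, while trivially $\rank(\bZ)\leq\min\{K,N\}=K$, so $\rank(\bZ)=K$.

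I do not expect a real obstacle in this lemma; it is essentially bookkeeping. The one point that genuinely needs care is \emph{which} full-rank statement is invoked and where: the assumption $M\leq N$ is precisely what forces $\bA$ to have trivial left null space (if instead $M>N$ one could have a nonzero $\by$ with $\by^\top\bA=\bzero^\top$, and the argument would collapse), while the assumption $K<M$ is exactly what makes $\bW^\top\bW$ invertible so that the closed-form update is defined in the first place.
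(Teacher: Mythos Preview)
Your proof is correct and, at heart, follows the same null-space strategy as the paper: both arguments show that $\bZ^\top$ has trivial null space by first using the full row rank of $\bA$ (i.e., $\nspace(\bA^\top)=\{\bzero\}$) and then the full column rank of $\bW$. The paper organizes this as two separate null-space equalities, $\nspace(\bW^\top)=\nspace\big((\bW^\top\bW)^{-1}\bW^\top\big)$ and $\nspace(\bT^\top)=\nspace(\bA^\top\bT^\top)$, whereas you collapse both into a single chain and dispatch the second step with the slick right-multiply-by-$\bW$ trick, which is cleaner. Your Sylvester-inequality cross-check is a genuinely different and shorter route that the paper does not take; it has the advantage of making the rank bookkeeping a one-liner once you know $\rank\big((\bW^\top\bW)^{-1}\bW^\top\big)=K$ and $\rank(\bA)=M$.
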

\begin{proof}[of Lemma~\ref{lemma:als-update-z-rank}]
Since $\bW^\top\bW\in \real^{K\times K}$ has full rank if $\bW$ has full rank, it follows that $(\bW^\top\bW)^{-1} $ has full rank. 

Suppose $\bW^\top\bx=\bzero$, this implies that $(\bW^\top\bW)^{-1} \bW^\top\bx=\bzero$. Thus, the following two null spaces satisfy~\footnote{We denote the null space of matrix $\bA$ by $\nspace(\bA)$ and its column space by $\cspace(\bA)$.}:
$
\nspace(\bW^\top) \subseteq \nspace\left((\bW^\top\bW)^{-1} \bW^\top\right).
$
Moreover, suppose $\bx$ lies in the null space of $(\bW^\top\bW)^{-1} \bW^\top$ such that $(\bW^\top\bW)^{-1} \bW^\top\bx=\bzero$. And since $(\bW^\top\bW)^{-1} $ is invertible, it implies $ \bW^\top\bx=(\bW^\top\bW)\bzero=\bzero$, leading to
$
\nspace\left((\bW^\top\bW)^{-1} \bW^\top\right)\subseteq \nspace(\bW^\top).
$
Consequently,  it follows that 
\begin{equation}\label{equation:als-z-sandiwch1}
\nspace(\bW^\top) = \nspace\left((\bW^\top\bW)^{-1} \bW^\top\right).
\end{equation}

Therefore, $(\bW^\top\bW)^{-1} \bW^\top$ has full rank $K$. Let $\bT=(\bW^\top\bW)^{-1} \bW^\top\in \real^{K\times M}$, and suppose $\bT^\top\bx=\bzero$. This implies $\bA^\top\bT^\top\bx=\bzero$, yielding 
$
\nspace(\bT^\top) \subseteq \nspace(\bA^\top\bT^\top).
$
Similarly, suppose $\bA^\top(\bT^\top\bx)=\bzero$. Since $\bA$ has full rank with the dimension of the null space being 0: $\dim\left(\nspace(\bA^\top)\right)=0$, $(\bT^\top\bx)$ must be zero. The claim follows  since $\bA$ has full rank $M$ with the row space of $\bA^\top$ being equal to the column space of $\bA$, where $\dim\left(\cspace(\bA)\right)=M$ and the $\dim\left(\nspace(\bA^\top)\right) = M-\dim\left(\cspace(\bA)\right)=0$. 
Consequently, $\bx$ is in the null space of $\bT^\top$ if $\bx$ is in the null space of $\bA^\top\bT^\top$:
$
\nspace(\bA^\top\bT^\top)\subseteq \nspace(\bT^\top).
$
Hence we obtain
\begin{equation}\label{equation:als-z-sandiwch2}
	\nspace(\bT^\top) = \nspace(\bA^\top\bT^\top).
\end{equation}
Since $\bT^\top$ has full rank $K<M\leq N$, it follows that $\dim\left(\nspace(\bT^\top) \right) = \dim\left(\nspace(\bA^\top\bT^\top)\right)=0$.
Therefore,
$\bZ^\top=\bA^\top\bT^\top$ has full rank $K$.
We complete the proof.
\end{proof}

\subsection*{Given $\bZ$, Optimizing $\bW$}

With $\bZ$ fixed, $L(\bW,\bZ)$ can be expressed as $L(\bW|\bZ)$ (or more compactly, as $L(\bW)$)  to emphasize its dependence on $\bW$:
$$
\begin{aligned}
	L(\bW|\bZ) &= \norm{\bW\bZ-\bA}_F^2.
\end{aligned}
$$
To solve the optimization problem (ALS2) directly, we need to compute the gradient of  $L(\bW|\bZ)$ with respect to $\bW$:
$$
\begin{aligned}
\nabla L(\bW|\bZ) &= 
2(\bW\bZ-\bA)\bZ^\top \in \real^{M\times K}.
\end{aligned}
$$
Similarly, the ``candidate" update for  $\bW$ can be obtained by locating the root of the gradient $\nabla L(\bW|\bZ)$:
\begin{equation}\label{equation:als-w-update}
	{\bW^\top = (\bZ\bZ^\top)^{-1}\bZ\bA^\top  \leftarrow \mathop{\arg\min}_{\bW} L(\bW|\bZ).}
\end{equation}
Once more, we emphasize that the update is merely a ``candidate" update. 
Further validation is necessary  to ascertain the positive definiteness of the Hessian matrix.
The Hessian matrix is obtained as follows:
\begin{equation}\label{equation:als-w-update_hessian}
	\begin{aligned}
		\nabla^2 L(\bW|\bZ) =2\widetildebZ\widetildebZ^\top \in \real^{KM\times KM}.
	\end{aligned}
\end{equation}
Therefore, by analogous analysis, if $\bZ$ has full rank with $K<N$, the Hessian matrix is positive definite.

In Lemma~\ref{lemma:als-update-z-rank}, we proved that $\bZ$ has full rank under certain conditions, such that  the Hessian matrix in Equation~\eqref{equation:als-w-update_hessian} is positive definite, and the update in Equation~\eqref{equation:als-w-update} exists.
We here prove that $\bW$ also has full rank under certain conditions, such that the Hessian in Equation~\eqref{equation:als-z-update_hessian} is positive definite, and the update in  Equation~\eqref{equation:als-z-update} exists.
\begin{lemma}[Rank of $\bW$ after Updating]\label{lemma:als-update-w-rank}
	Suppose $\bA\in \real^{M\times N}$ has full rank with \textcolor{mylightbluetext}{$M\geq N$} and $\bZ\in \real^{K\times N}$ has full rank with $K<N$ (i.e., $K<N\leq M$), then the update of $\bW^\top = (\bZ\bZ^\top)^{-1}\bZ\bA^\top$ in Equation~\eqref{equation:als-w-update} has full rank.
\end{lemma}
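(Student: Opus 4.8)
The plan is to mirror the proof of Lemma~\ref{lemma:als-update-z-rank} essentially verbatim, exchanging the roles of $\bW$ and $\bZ$ and transposing throughout; the hypotheses here have been swapped ($M\geq N$ in place of $M\leq N$, and $\bZ$ full rank in place of $\bW$ full rank) precisely so that this dualization goes through. First I would transpose the update formula to write $\bW = \bA\,\bZ^\top(\bZ\bZ^\top)^{-1}$, and introduce $\bS = (\bZ\bZ^\top)^{-1}\bZ \in \real^{K\times N}$, so that $\bW = \bA\bS^\top$ and it suffices to show that $\bW$ has full column rank $K$.

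Next I would establish that $\bS$ has full rank $K$. Since $\bZ\in\real^{K\times N}$ has full rank $K$ with $K<N$, the Gram matrix $\bZ\bZ^\top\in\real^{K\times K}$ is invertible, hence so is $(\bZ\bZ^\top)^{-1}$. A double-inclusion argument identical to the one producing Equation~\eqref{equation:als-z-sandiwch1} then gives $\nspace(\bZ) = \nspace\big((\bZ\bZ^\top)^{-1}\bZ\big) = \nspace(\bS)$: the inclusion $\subseteq$ is immediate, and $\supseteq$ follows by left-multiplying $(\bZ\bZ^\top)^{-1}\bZ\bx=\bzero$ by $\bZ\bZ^\top$. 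Because $\bZ$ has full row rank, $\dim\nspace(\bZ) = N-K$, hence $\dim\nspace(\bS) = N-K$ and $\rank(\bS)=K$; equivalently, $\bS^\top\in\real^{N\times K}$ has trivial null space.

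Then I would transfer full rank through $\bA$, exactly as in the step yielding Equation~\eqref{equation:als-z-sandiwch2}. Writing $\bW = \bA\bS^\top$, I claim $\nspace(\bS^\top) = \nspace(\bA\bS^\top)$: the direction $\subseteq$ is trivial, and for $\supseteq$ one uses that $\bA$ has full rank with $M\geq N$, so $\bA$ has full \emph{column} rank and $\dim\nspace(\bA)=0$, forcing $\bS^\top\bx=\bzero$ whenever $\bA\bS^\top\bx=\bzero$. Combining the two identities gives $\nspace(\bW) = \nspace(\bA\bS^\top) = \nspace(\bS^\top) = \{\bzero\}$, so $\bW\in\real^{M\times K}$ has full rank $K$, which completes the proof.

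The argument is routine once the dualization is set up; the only place needing care — and the reason the hypothesis reads $M\geq N$ rather than $M\leq N$ here — is the final inclusion, where we require $\nspace(\bA)=\{\bzero\}$ (full column rank of $\bA$) in place of $\nspace(\bA^\top)=\{\bzero\}$ used in Lemma~\ref{lemma:als-update-z-rank}. I would also double-check that the stated chain $K<N\leq M$ delivers both pieces we use: $K<N$ (so $\bZ\bZ^\top$ is invertible and $\bS$ can only drop to rank $K$) and $K<M$ (so that "full rank $K$" is the correct notion of full rank for the tall matrix $\bW\in\real^{M\times K}$).
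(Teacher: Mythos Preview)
Your proposal is correct and is precisely the dualization the paper has in mind: the paper's own ``proof'' of this lemma is simply the sentence that it is similar to Lemma~\ref{lemma:als-update-z-rank} with the details omitted, and your write-up carries out exactly that analogy, matching the two null-space sandwich arguments~\eqref{equation:als-z-sandiwch1} and~\eqref{equation:als-z-sandiwch2} after swapping $\bW\leftrightarrow\bZ$ and using $\nspace(\bA)=\{\bzero\}$ (from $M\geq N$) in place of $\nspace(\bA^\top)=\{\bzero\}$.
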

The proof of Lemma~\ref{lemma:als-update-w-rank} is similar to that of  Lemma~\ref{lemma:als-update-z-rank}, and we shall not repeat the details.

\paragraph{Key observation.}
Combining the observations in Lemma~\ref{lemma:als-update-z-rank} and Lemma~\ref{lemma:als-update-w-rank}, as long as we \textcolor{mylightbluetext}{initialize $\bZ$ and $\bW$ to have full rank}, the updates in Equation~\eqref{equation:als-z-update} and Equation~\eqref{equation:als-w-update} are reasonable \textbf{since the Hessians in Equation~\eqref{equation:als-z-update_hessian} and \eqref{equation:als-w-update_hessian} are positive definite}. 
Note that we need an additional condition to satisfy  
both Lemma~\ref{lemma:als-update-z-rank} 
and Lemma~\ref{lemma:als-update-w-rank}: $M=N$, meaning there must be an equal number of movies and   users (in the Netflix context).
We will relax this condition in the next section through regularization.
We summarize the process in Algorithm~\ref{alg:als}.

\begin{algorithm}[h] 
	\caption{Alternating Least Squares (ALS)}
	\label{alg:als}
	\begin{algorithmic}[1] 
		\Require Matrix $\bA\in \real^{M\times N}$ \textcolor{mylightbluetext}{with $M= N$};
		\State Initialize $\bW\in \real^{M\times K}$, $\bZ\in \real^{K\times N}$ \textcolor{mylightbluetext}{with full rank and $K<M= N$}; 
		\State Choose a stop criterion on the approximation error $\delta$;
		\State Choose the maximal number of iterations $C$;
		\State $iter=0$; \Comment{Count for the number of iterations}
		\While{$\norm{\bA-\bW\bZ}_F>\delta $ and $iter<C$} 
		\State $iter=iter+1$;
		\State $\bZ = (\bW^\top\bW)^{-1} \bW^\top \bA  \leftarrow \mathop{\arg \min}_{\bZ} L(\bZ|\bW)$;
		\State $\bW^\top = (\bZ\bZ^\top)^{-1}\bZ\bA^\top  \leftarrow \mathop{\arg\min}_{\bW} L(\bW|\bZ)$;
		\EndWhile
		\State Output $\bW$ and $\bZ$;
	\end{algorithmic} 
\end{algorithm}

\subsection{Regularization: Extension to General Matrices}\label{section:regularization-extention-general}

\textit{Regularization} is a machine learning technique used to prevent overfitting and enhance model generalization. Overfitting happens when a model becomes too complex and fits the training data excessively, leading to poor performance on new, unseen data.
To mitigate this, regularization adds a constraint or penalty term to the loss function used for model optimization, discouraging the development of overly complex models. This creates a balance between model simplicity and the ability to fit the training data well.
Common forms of regularization include $\ell_1$ regularization, $\ell_2$ regularization, and elastic net regularization (which combines $\ell_1$ and $\ell_2$ regularization). 
Regularization is widely applied in machine learning algorithms like linear regression, logistic regression, and neural networks.

In the context of the alternating least squares problem, we can introduce an $\ell_2$ regularization term  to minimize the following loss:
\begin{equation}\label{equation:als-regularion-full-matrix}
	L(\bW,\bZ)  =\norm{\bW\bZ-\bA}_F^2 +\lambda_w \norm{\bW}_F^2 + \lambda_z \norm{\bZ}_F^2, \qquad \lambda_w>0, \lambda_z>0,
\end{equation}
where the gradient with respect to $\bZ$ and $\bW$ are given respectively by 
\begin{equation}\label{equation:als-regulari-gradien}
	\left\{
	\begin{aligned}
		\nabla L(\bZ|\bW) &= 2\bW^\top(\bW\bZ-\bA) + 2\lambda_z\bZ \in \real^{K\times N};\\
		\nabla L(\bW|\bZ)  &= 2(\bW\bZ-\bA)\bZ^\top + 2\lambda_w\bW \in \real^{M\times K}.
	\end{aligned}
	\right.
\end{equation}
The Hessian matrices are given respectively by 
$$
\left\{
\begin{aligned}
	\nabla^2 L(\bZ|\bW) &= 2\widetildebW^\top\widetildebW+ 2\lambda_z\bI \in \real^{KN\times KN};\\
	\nabla^2 L(\bW|\bZ)  &= 2\widetildebZ\widetildebZ^\top + 2\lambda_w\bI \in \real^{KM\times KM}, \\
\end{aligned}
\right.
$$
which are positive definite due to the perturbation by the regularization. 
\textbf{The regularization ensues that the Hessian matrices become positive definite, even if $\bW$ and $\bZ$ are rank-deficient}. 
Consequently, matrix decomposition can be extended to any matrix, regardless of whether $M>N$ or $M<N$. In rare cases, $K$ can be chosen as $K>\max\{M, N\}$ such that a \textit{high-rank approximation} of $\bA$ is obtained. However, in most scenarios, we aim to find the \textit{low-rank approximation} of $\bA$ with $K<\min\{M, N\}$. For example,  ALS can be employed to discover low-rank neural networks, reducing the memory usage of  neural networks while improving performance \citep{lu2021numerical}.
Therefore, the minimizers can be determined by identifying the roots of the gradient:
\begin{equation}\label{equation:als-regular-final-all}
	\left\{
	\begin{aligned}
		\bZ &= (\bW^\top\bW+ \lambda_z\bI)^{-1} \bW^\top \bA  ;\\
		\bW^\top &= (\bZ\bZ^\top+\lambda_w\bI)^{-1}\bZ\bA^\top .
	\end{aligned}
	\right.
\end{equation}
The regularization parameters $\lambda_z, \lambda_w\in \real$ are used to balance the trade-off
between the accuracy of the approximation and the smoothness of the computed solution. 
These parameters are typically chosen based on the specific problem at hand and can be determined using \textit{cross-validation} (CV). 
We outline the process in Algorithm~\ref{alg:als-regularizer}.

The $\ell_2$ (or $\ell_1$ regularizations) can be applied to generalize the ALS problem to general matrices.
However, we will address the issue of missing entries in the following sections, thus the problem becomes a matrix completion formulation.
In this context, the $\ell_1$ and $\ell_2$ regularizations  are not the only options; for instance, the \textit{nuclear norm} \footnote{Also called the \textit{Schatten 1-norm} or \textit{trace norm}.} of $\bW\bZ$ (the sum of singular values of the matrix) can be used, for which the \textit{Soft-Impute for matrix completion} algorithm guarantees the recovery of the matrix when the number of observed entries $z$ satisfies
$$
z\geq C r n \log n,
$$
where the underlying matrix $\bA$ is of size $\real^{n\times n}$ and $C > 0$ is a fixed universal constant \citep{gross2011recovering, hastie2015statistical}.

\begin{algorithm}[H] 
\caption{Alternating Least Squares with Regularization}
\label{alg:als-regularizer}
\begin{algorithmic}[1] 
\Require Matrix $\bA\in \real^{M\times N}$;
\State Initialize $\bW\in \real^{M\times K}$, $\bZ\in \real^{K\times N}$ \textcolor{mylightbluetext}{randomly without condition on the rank and the relationship between $M, N, K$}; 
\State Choose a stop criterion on the approximation error $\delta$;
\State Choose regularization parameters $\lambda_w, \lambda_z$;
\State Choose the  maximal number of iterations $C$;
\State $iter=0$; \Comment{Count for the number of iterations}
\While{$\norm{\bA-\bW\bZ}_F>\delta $ and $iter<C$}
\State $iter=iter+1$; 
\State $\bZ = (\bW^\top\bW+ \lambda_z\bI)^{-1} \bW^\top \bA  \leftarrow \mathop{\arg \min}_{\bZ} L(\bZ|\bW)$;
\State $\bW^\top = (\bZ\bZ^\top+\lambda_w\bI)^{-1}\bZ\bA^\top  \leftarrow \mathop{\arg\min}_{\bW} L(\bW|\bZ)$;
\EndWhile
\State Output $\bW$ and $\bZ$;
\end{algorithmic} 
\end{algorithm}

\subsection{Missing Entries and Rank-One Update}\label{section:alt-columb-by-column}
The matrix decomposition via the ALS method is extensively used in the context of Netflix recommender data, where a substantial number of entries are missing due to users not having watched certain movies or choosing not to rate them for various reasons.
In this scenario, the low-rank matrix decomposition is also known as \textit{matrix completion}, which can help recover unobserved entries \citep{jain2017non}.
To address this, we can introduce an additional mask matrix $\bM\in \{0,1\}^{M\times N}$, where $m_{mn}\in \{0,1\}$ indicates if the user $n$ has rated the movie $m$ or not. Therefore, the loss function can be defined as 
$$
L(\bW,\bZ) = \norm{\bM\hadaprod  \bA- \bM\hadaprod (\bW\bZ)}_F^2,
$$
where $\hadaprod$ represents the \textit{Hadamard product} between matrices. For example, the Hadamard product of a $3 \times 3$ matrix $\bA$ and a $3\times 3$ matrix $\bB$ is
$$
\bA\hadaprod \bB = 
\begin{bmatrix}
	a_{11} & a_{12} & a_{13} \\
	a_{21} & a_{22} & a_{23} \\
	a_{31} & a_{32} & a_{33}
\end{bmatrix}
\hadaprod
\begin{bmatrix}
	b_{11} & b_{12} & b_{13} \\
	b_{21} & b_{22} & b_{23} \\
	b_{31} & b_{32} & b_{33}
\end{bmatrix}
=
\begin{bmatrix}
	a_{11}b_{11} &  a_{12}b_{12} & a_{13}b_{13} \\
	a_{21}b_{21} & a_{22}b_{22} & a_{23}b_{23} \\
	a_{31}b_{31} & a_{32}b_{32} & a_{33}b_{33}
\end{bmatrix}.
$$
To find the solution of the problem, we decompose the updates in Equation~\eqref{equation:als-regular-final-all} into:
\begin{equation}\label{equation:als-ori-all-wz}
	\left\{
	\begin{aligned}
		\bz_n &= (\bW^\top\bW+ \lambda_z\bI)^{-1} \bW^\top \ba_n, &\gap& \text{for $n\in \{1,2,\ldots, N\}$}  ;\\
		\bw_m &= (\bZ\bZ^\top+\lambda_w\bI)^{-1}\bZ\bb_m,  &\gap& \text{for $m\in \{1,2,\ldots, M\}$} ,
	\end{aligned}
	\right.
\end{equation}
where $\bZ=[\bz_1, \bz_2, \ldots, \bz_N]$ and $\bA=[\ba_1,\ba_2, \ldots, \ba_N]$ represent the column partitions of $\bZ$ and $\bA$, respectively. Similarly, $\bW^\top=[\bw_1, \bw_2, \ldots, \bw_M]$ and $\bA^\top=[\bb_1,\bb_2, \ldots, \bb_M]$ are the column partitions of $\bW^\top$ and $\bA^\top$, respectively, for further evaluation. This decomposition of the updates indicates that they can be performed in a column-by-column fashion (the rank-one update).

\paragraph{Given $\bW$.}
Let $\bo_n\in \{0,1\}^M$ represent the movies rated by user $n$, where $o_{nm}=1$ if user $n$ has rated movie $m$, and $o_{nm}=0$ otherwise. Then the $n$-th column of $\bA$ without missing entries can be denoted using the Matlab-style notation as $\ba_n[\bo_n]$. 
And we want to approximate the existing entries of the $n$-th column by $\ba_n[\bo_n] \approx \bW[\bo_n, :]\bz_n$, which is essentially  a rank-one least squares problem:
\begin{equation}\label{equation:als-ori-all-wz-modif-z}
	\begin{aligned}
		\bz_n &= \left(\bW[\bo_n, :]^\top\bW[\bo_n, :]+ \lambda_z\bI\right)^{-1} \bW[\bo_n, :]^\top \ba_n[\bo_n], &\gap& \text{for $n\in \{1,2,\ldots, N\}$} .
	\end{aligned}
\end{equation}

\paragraph{Given $\bZ$.}
Similarly, if $\bp_m \in\{0,1\}^{N}$ denotes the users who have rated  movie $m$, with $p_{mn}=1$ if the movie $m$ has been rated by user $n$, and $p_{mn}=0$ otherwise. Then the $m$-th row of $\bA$ without missing entries can be denoted by the Matlab-style notation as $\bb_m[\bp_m]$. And we want to approximate the existing entries of the $m$-th row by $\bb_m[\bp_m] \approx \bZ[:, \bp_m]^\top\bw_m$, 
\footnote{Note that $\bZ[:, \bp_m]^\top$ is the transpose of $\bZ[:, \bp_m]$, which is equal to $\bZ^\top[\bp_m,:]$, i.e., transposing first and then selecting.}
which  is a rank-one least squares problem again:
\begin{equation}\label{equation:als-ori-all-wz-modif-w}
	\begin{aligned}
		\bw_m &= (\bZ[:, \bp_m]\bZ[:, \bp_m]^\top+\lambda_w\bI)^{-1}\bZ[:, \bp_m]\bb_m[\bp_m],  &\gap& \text{for $m\in \{1,2,\ldots, M\}$} .
	\end{aligned}
\end{equation}
The procedure is once again presented in Algorithm~\ref{alg:als-regularizer-missing-entries}.
\begin{algorithm}[h] 
\caption{Alternating Least Squares with Missing Entries and Regularization}
\label{alg:als-regularizer-missing-entries}
\begin{algorithmic}[1] 
\Require Matrix $\bA\in \real^{M\times N}$;
\State Initialize $\bW\in \real^{M\times K}$, $\bZ\in \real^{K\times N}$ \textcolor{mylightbluetext}{randomly without condition on the rank and the relationship between $M, N, K$}; 
\State Choose a stop criterion on the approximation error $\delta$;
\State Choose regularization parameters $\lambda_w, \lambda_z$;
\State Compute the mask matrix $\bM$ from $\bA$;
\State Choose the maximal number of iterations $C$;
\State $iter=0$; \Comment{Count for the number of iterations}
\While{\textcolor{mylightbluetext}{$\norm{\bM\hadaprod  \bA- \bM\hadaprod (\bW\bZ)}_F^2>\delta $} and $iter<C$}
\State $iter=iter+1$; 
\For{$n=1,2,\ldots, N$}
\State $\bz_n = \left(\bW[\bo_n, :]^\top\bW[\bo_n, :]+ \lambda_z\bI\right)^{-1} \bW[\bo_n, :]^\top \ba_n[\bo_n]$; \Comment{$n$-th column of $\bZ$}
\EndFor

\For{$m=1,2,\ldots, M$}
\State $\bw_m = (\bZ[:, \bp_m]\bZ[:, \bp_m]^\top+\lambda_w\bI)^{-1}\bZ[:, \bp_m]\bb_m[\bp_m]$;\Comment{$m$-th column of $\bW^\top$}
\EndFor
\EndWhile
\State Output $\bW^\top=[\bw_1, \bw_2, \ldots, \bw_M]$ and $\bZ=[\bz_1, \bz_2, \ldots, \bz_N]$;
\end{algorithmic} 
\end{algorithm}

\section{Special Matrix Products and Properties}\label{sec:special}
In the forthcoming sections, several key matrix products will play a significant role in explaining the algorithms under discussion. These matrix operations are not only foundational elements in the development and understanding of these algorithms but also find broad application in the context of tensor decomposition \citep{lu2021numerical}. 
The significance of these matrix products cannot be overstated, as they form the core of many computational techniques and methodologies that are central to both algorithmic illustration and tensor-based analyses. These matrix products provide the mathematical framework for decomposing complex data structures into simpler, more interpretable components, and they enable the efficient implementation of algorithms designed to extract meaningful information from high-dimensional data.

\subsection{Kronecker Product}
The \textit{Kronecker product} of two vectors $\ba \in \real^{I}$ and $\bb\in \real^{K}$, denoted by $\ba\otimes \bb$, is defined as follows:
$$
\ba\otimes \bb=
\begin{bmatrix}
	a_1\bb \\
	a_2\bb \\
	\vdots \\
	a_I\bb
\end{bmatrix}
\in\real^{IK},
$$
which is a column vector of size $(IK)$. It can be easily verified that if $\norm{\ba}=\norm{\bb}=1$, then $\norm{\ba\otimes \bb}=1$.
\begin{definition}[Matrix Kronecker Product]\label{definition:kronecker-product}
	Similarly, the \textit{Kronecker product} of two matrices $\bA \in \real^{I\times J}$ and $\bB\in \real^{K\times L}$, denoted by $\bA\otimes \bB$, is defined as follows:
	$$
	\begin{aligned}
		\bA\otimes \bB &= 
		\begin{bmatrix}
			a_{11} \bB & a_{12}\bB & \ldots & a_{1J}\bB \\
			a_{21} \bB & a_{22}\bB & \ldots & a_{2J}\bB \\
			\vdots  & \vdots  & \ddots & \vdots \\
			a_{I1} \bB & a_{I2}\bB & \ldots & a_{IJ}\bB \\
		\end{bmatrix}\\
		&=
		\begin{bmatrix}
			\ba_1 \otimes \bb_1 \,\,\ldots & \ba_1\otimes \ba_L \mid 
			\ba_2 \otimes \bb_1   \,\,\ldots &\ba_2\otimes \ba_L \mid 
			\ba_J \otimes \bb_1   \,\,\ldots &\ba_J\otimes \ba_L\,
		\end{bmatrix},
	\end{aligned}
	$$
	which is a matrix of size $(IK)\times (JL)$. 
	That is, the Kronecker product $\bA\otimes \bB$ can be divided into $I\times J$ blocks; for each block $(i,j)$, it is a $K\times L$ matrix recorded by $a_{ij}\bB$.
	When $\bA$ and $\bB$ are of the same shape, then it can be shown that $\bA\hadaprod \bB$ is a principal submatrix of $\bA\otimes \bB$.
\end{definition}

The Kronecker product maintains specific matrix properties that can simplify computations and ensure stability.
\begin{lemma}[Kronecker of Orthogonal, Triangular, Diagonal, (Semi)definite, Nonsingular]\label{lemma:krokecker_keep_special}
	The Kronecker products of two orthogonal, two triangular, two diagonal,  two (semi)definite, or two nonsingular matrices are also orthogonal, triangular,  diagonal, (semi)definite, or nonsingular, respectively. 
\end{lemma}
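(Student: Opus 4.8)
The plan is to reduce everything to two algebraic identities that follow immediately from Definition~\ref{definition:kronecker-product}: the transpose rule $(\bA\otimes\bB)^\top = \bA^\top\otimes\bB^\top$, and the \emph{mixed-product property} $(\bA\otimes\bB)(\bC\otimes\bD) = (\bA\bC)\otimes(\bB\bD)$ whenever $\bA\bC$ and $\bB\bD$ are defined. I would establish the mixed-product property first by a block computation: the $(i,j)$ block of $(\bA\otimes\bB)(\bC\otimes\bD)$ is $\sum_k (a_{ik}\bB)(c_{kj}\bD) = \big(\sum_k a_{ik}c_{kj}\big)\bB\bD$, which is exactly the $(i,j)$ block of $(\bA\bC)\otimes(\bB\bD)$; the transpose rule is read off the block layout in the same way.

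The orthogonal and nonsingular cases then take one line each. If $\bA,\bB$ are orthogonal, then $(\bA\otimes\bB)^\top(\bA\otimes\bB) = (\bA^\top\bA)\otimes(\bB^\top\bB) = \bI\otimes\bI = \bI$, so $\bA\otimes\bB$ is orthogonal. If $\bA,\bB$ are nonsingular, then $(\bA^{-1}\otimes\bB^{-1})(\bA\otimes\bB) = (\bA^{-1}\bA)\otimes(\bB^{-1}\bB) = \bI$, which exhibits $\bA^{-1}\otimes\bB^{-1}$ as an explicit inverse (alternatively one may invoke $\det(\bA\otimes\bB) = (\det\bA)^{L}(\det\bB)^{J}$ for $\bA\in\real^{J\times J}$, $\bB\in\real^{L\times L}$).

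For the triangular and diagonal cases I would argue directly from the block form. Suppose $\bA$ and $\bB$ are both upper triangular. In $\bA\otimes\bB$, every block $(i,j)$ with $i>j$ equals $a_{ij}\bB = \bzero$, while each diagonal block $a_{ii}\bB$ is itself upper triangular; translating block indices back to scalar entry indices shows that every entry strictly below the main diagonal of $\bA\otimes\bB$ vanishes. The lower-triangular case is the transpose of this, and the diagonal case follows either by intersecting the two arguments or directly, since then the off-diagonal blocks are zero and the diagonal blocks $a_{ii}\bB$ are diagonal. For the (semi)definite case I would first note symmetry from the transpose rule, then use a factorization: if $\bA=\bR^\top\bR$ and $\bB=\bS^\top\bS$ are positive semidefinite, the mixed-product property gives $\bA\otimes\bB = (\bR\otimes\bS)^\top(\bR\otimes\bS)\succeq \bzero$; in the positive definite case, choosing $\bR,\bS$ nonsingular (e.g.\ Cholesky factors) makes $\bR\otimes\bS$ nonsingular by the previous case, so $(\bR\otimes\bS)^\top(\bR\otimes\bS)$ is positive definite. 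Alternatively one can use that the eigenvalues of $\bA\otimes\bB$ are precisely the products $\lambda_i(\bA)\lambda_j(\bB)$, which are nonnegative (resp.\ positive) when both factors are.

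I expect the fiddliest step to be the bookkeeping in the triangular case — making the passage from ``block upper triangular with upper-triangular diagonal blocks'' to ``upper triangular'' rigorous at the level of scalar indices — together with deciding how much of the mixed-product identity and the eigenvalue/factorization facts to prove versus cite, if one wants the (semi)definite case to be genuinely self-contained.
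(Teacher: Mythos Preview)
The paper does not supply its own proof of this lemma; it is stated as a fact, with the supporting identities (the mixed-product property, the transpose rule, and the eigenvalue lemma for Kronecker products) established or cited separately in the surrounding text and in the Remark that follows. Your proposal is correct and is precisely the standard argument one assembles from those identities, so there is nothing to compare against and nothing to fix.
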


Specifically, we notice that, given four vectors \{$\ba \in \real^{I}$ and $\bb\in \real^{K}$\} and \{$\bc \in \real^{I}$ and $\bd\in \real^{K}\}$, then 
\begin{equation}\label{equation:kronecker-vector-find2}
	(\ba\otimes \bb)^\top(\bc\otimes \bd)=
	\begin{bmatrix}
		a_1\bb^\top & 
		a_2\bb^\top &
		\ldots &
		a_I\bb^\top  
	\end{bmatrix}
	\begin{bmatrix}
		c_1\bd \\
		c_2\bd \\
		\vdots \\
		c_I\bd
	\end{bmatrix}
	=\sum_{i=1}^{I}a_ic_i \bb^\top\bd=(\ba^\top\bc)(\bb^\top\bd).
\end{equation}
Particularly, when $\bc=\ba$ and $\bd=\bb$, it follows that 
$$
(\ba\otimes \bb)^\top(\ba\otimes \bb) =\norm{\ba}^2 \cdot \norm{\bb}^2.
$$
Similarly, given four matrices $\bA,\bC\in \real^{I\times J}$ and $\bB,\bD\in \real^{K\times L}$, it follows that 
\begin{equation}\label{equation:kron_pro1_trans}
	(\bA\otimes \bB)^\top (\bC\otimes \bD) = (\bA^\top\bC) \otimes (\bB^\top\bD).
\end{equation}
Note also, for $\bA \in \real^{I\times J}$, $\bB\in \real^{K\times L}$, $\bC\in \real^{J\times I}$, and $\bD\in \real^{L\times K}$, the  equation above reduces to
\begin{equation}\label{equation:kron_pro1_notrans}
	(\bA\otimes \bB) (\bC\otimes \bD) = (\bA\bC) \otimes (\bB\bD).
\end{equation}
When $\bA \in \real^{I\times J}$, $\bB\in \real^{K\times L}$, $\bc\in \real^{J}$, and $\bd\in \real^{L}$, the equality becomes
\begin{equation}\label{equation:kronecker_eq3}
	(\bA\otimes \bB) (\bc \otimes \bd) = (\bA\bc) \otimes (\bB\bd).
\end{equation}
More generally, when $\bA \in \real^{I\times J}$, $\bB\in \real^{K\times L}$, $\bC\in \real^{J\times P}$, and $\bD\in \real^{L\times Q}$, we have
\begin{equation}
	(\bA\otimes \bB) (\bC\otimes \bD) = (\bA\bC) \otimes (\bB\bD).
\end{equation}

From the definition of the Kronecker product, it can be readily verified that any eigenvalue of $\bA\otimes \bB$ arises as a product of the eigenvalues of $\bA$ and $\bB$.
\begin{lemma}[Eigenvalue of Kronecker Product, \citep{horn1994topics}]
	Suppose $\bA\in\real^{m\times m}$ has an eigenpair $(\lambda, \bx)$, and $\bB\in\real^{n\times n}$ has an eigenpair $(\mu, \by)$. 
	Then $\lambda\mu$ is an eigenvalue of $\bA\otimes \bB$ corresponding to the eigenvector $\bx\otimes \by$.
\end{lemma}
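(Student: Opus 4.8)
The plan is to verify the claim by a direct computation using the mixed-product property of the Kronecker product already recorded in Equation~\eqref{equation:kronecker_eq3}. Since $(\lambda,\bx)$ is an eigenpair of $\bA$ and $(\mu,\by)$ is an eigenpair of $\bB$, I would start from the defining relations $\bA\bx=\lambda\bx$ and $\bB\by=\mu\by$, keeping in mind that $\bx\neq\bzero$ and $\by\neq\bzero$ by the definition of an eigenvector.

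Next I would evaluate the action of $\bA\otimes\bB$ on the candidate eigenvector $\bx\otimes\by$. Applying Equation~\eqref{equation:kronecker_eq3} with the vectors $\bx$ and $\by$ playing the roles of $\bc$ and $\bd$ gives
$$
(\bA\otimes\bB)(\bx\otimes\by) = (\bA\bx)\otimes(\bB\by) = (\lambda\bx)\otimes(\mu\by).
$$
It then remains to pull the scalar factors through the Kronecker product. This follows from the bilinearity of $\otimes$ — or directly from Definition~\ref{definition:kronecker-product}, since multiplying every block of $\bA\otimes\bB$ by a constant merely rescales the whole matrix — so that $(\lambda\bx)\otimes(\mu\by)=\lambda\mu\,(\bx\otimes\by)$. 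Chaining these equalities yields $(\bA\otimes\bB)(\bx\otimes\by)=(\lambda\mu)(\bx\otimes\by)$.

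Finally, to be sure this genuinely exhibits $\lambda\mu$ as an eigenvalue, I would confirm that $\bx\otimes\by$ is nonzero. This is immediate from the identity $(\bx\otimes\by)^\top(\bx\otimes\by)=\norm{\bx}^2\cdot\norm{\by}^2$ recorded just after Equation~\eqref{equation:kronecker-vector-find2}: since $\bx$ and $\by$ are nonzero, $\norm{\bx\otimes\by}^2>0$, hence $\bx\otimes\by\neq\bzero$. There is no substantive obstacle in this argument; the only points needing a line of justification are the extraction of the scalar factors from the Kronecker product and the non-vanishing of $\bx\otimes\by$, both of which are routine consequences of the definition and the earlier identities.
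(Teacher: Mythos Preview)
Your argument is correct and complete; the paper itself does not give a proof of this lemma beyond remarking that it ``can be readily verified'' from the definition of the Kronecker product, and your direct computation via Equation~\eqref{equation:kronecker_eq3} is exactly the natural verification the paper gestures at. There is nothing to add.
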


We introduce the associative and distributive properties of Kronecker products without a proof. Detailed proofs can be found in \citet{horn1994topics}.
\begin{remark}[Properties of Kronecker Products]
The Kronecker product is associative, i.e.,
$$
(\bA\otimes \bB) \otimes \bC =  \bA\otimes (\bB\otimes \bC).
$$
The Kronecker product is right–distributive, i.e.,
$$
(\bA+ \bB) \otimes \bC = \bA \otimes \bC + \bB \otimes \bC.
$$
The Kronecker product is left–distributive, i.e.,
$$
\bA\otimes (\bB+ \bC)  = \bA \otimes \bB + \bA \otimes \bC.
$$
Taking the transpose before carrying out the Kronecker product yields the same result as doing so afterwards, i.e.,
\begin{equation}\label{equation:kro_trans_pro}
(\bA\otimes \bB)^\top=\bA^\top \otimes \bB^\top.
\end{equation}
Taking the inverse before carrying out the Kronecker product yields the same result as doing so afterwards, i.e.,
$$
(\bA\otimes \bB)^{-1}=\bA^{-1} \otimes \bB^{-1}.
$$
\end{remark}
Using equality~\eqref{equation:kro_trans_pro}, we can therefore prove Equation~\eqref{equation:kron_pro1_trans} from Equation~\eqref{equation:kron_pro1_notrans}, or conversely.

\subsection{Khatri-Rao Product}
The Khatri-Rao product, often denoted by $\odot$, is a specialized matrix product that is closely related to the Kronecker product but is typically used in settings where the matrices involved are column-wise partitioned. 
\begin{definition}[Khatri-Rao Product]\label{definition:khatri-rao-product}
	The \textit{Khatri-Rao product} of two matrices  $\bA \in \real^{I\times K}$ and $\bB\in \real^{J\times K}$,  denoted by $\bA\odot \bB$, is defined as follows:
	$$
	\bA\odot\bB =
	\begin{bmatrix}
		\ba_1\otimes \bb_1 & \ba_2\otimes \bb_2 & \ldots & \ba_K\otimes \bb_K
	\end{bmatrix},
	$$
	which is a matrix of size $(IJ)\times K$. And it is known as the ``matching column-wise" Kronecker product.
	
\end{definition}

Consider partitioned matrices, the Khatri-Rao product has its partition-wise counterpart, denoted by $\odot_b$.
\begin{definition}[Partition-wise Khatri-Rao Product]\label{definition:partition_khatri_prod}
	The partition-wise Khatri-Rao product of two matrices  
	$$\bA=[\underbrace{\bA_1}_{\real^{I\times M_1}}, \underbrace{\bA_2}_{\real^{I\times M_2}}, \ldots, \underbrace{\bA_R}_{\real^{I\times M_R}}]
	\gap \text{and}\gap 
	\bB=[\underbrace{\bB_1}_{\real^{J\times N_1}}, \underbrace{\bB_2}_{\real^{J\times N_2}}, \ldots, \underbrace{\bB_R}_{\real^{J\times N_R}}],$$  
	denoted by $\bA\odot_b \bB$, is defined as follows:
	$$
	\bA\odot_b\bB =
	\begin{bmatrix}
		\bA_1\otimes \bB_1 & \bA_2\otimes \bB_2 & \ldots & \bA_R\otimes \bB_R
	\end{bmatrix},
	$$
	which is a matrix of size $(IJ)\times (\sum_{i=1}^{R} M_iN_i)$.
\end{definition}

The Khatri-Rao product shares some properties with the Kronecker product, but it is specifically designed for operations involving matrices with a common number of columns.
Based on the  definition of the Khatri-Rao product, for two vectors $\ba$ and $\bb$, we can observe that the Khatri-Rao product of these two vectors is equivalent to their Kronecker product:
$$
\ba \odot \bb = \ba\otimes\bb.
$$
Given three matrices $\bA, \bB$, and $\bC$, the ``distributive law" for the Khatri-Rao product follows that 
$$
\bA\odot \bB\odot \bC = (\bA\odot \bB)\odot \bC = \bA\odot (\bB\odot \bC).
$$
Additionally, when $\bA, \bB\in\real^{I\times K}$ are semi-orthogonal matrices~\footnote{Given a  matrix $\bQ\in\real^{m\times n}$ with mutually orthonormal columns and $m\neq n$, semi-orthogonal matrices satisfying $\bQ^\top\bQ=\bI \in \real^{n\times n}$. While orthogonal matrices has the property that $\bQ\bQ^\top=\bQ^\top\bQ=\bI$ and $m=n$.}, the Khatri-Rao product $\bA\odot \bB$ is also semi-orthogonal.

\subsection{More Properties of Special Matrix Products}
\paragraph{\colorbox{\mdframecolorBox}{$(\bA\odot \bB )^\top (\bA\odot \bB )= 
		(\bA^\top\bA) \hadaprod (\bB^\top\bB)$}.}
Moreover, we observe that for two matrices   $\bA \in \real^{I\times K}$ and $\bB\in \real^{J\times K}$, the following relationship holds:
\begin{equation}
	\bZ = (\bA\odot \bB )^\top (\bA\odot \bB )=
	\begin{bmatrix}
		(\ba_1\otimes \bb_1)^\top  \\
		(\ba_2\otimes \bb_2)^\top  \\
		\vdots \\ 
		(\ba_K\otimes \bb_K)^\top 
	\end{bmatrix}
	\begin{bmatrix}
		\ba_1\otimes \bb_1 & \ba_2\otimes \bb_2 & \ldots & \ba_K\otimes \bb_K
	\end{bmatrix},
\end{equation}
where $\bZ\in \real^{K\times K}$, and each entry $(i,j)$, denoted by $z_{ij}$, is given by 
$$
z_{ij} = (\ba_i\otimes \bb_i)^\top (\ba_j\otimes \bb_j) = (\ba_i^\top\ba_j)(\bb_i^\top\bb_j),
$$
where the last equality is derived from Equation~\eqref{equation:kronecker-vector-find2}. Therefore, $\bZ$ can be expressed equivalently  as 
\begin{equation}\label{equation:two-khatri-rao-pro-equi}
	\bZ = (\bA\odot \bB )^\top (\bA\odot \bB ) = 
	(\bA^\top\bA) \hadaprod (\bB^\top\bB).
\end{equation}

\paragraph{\colorbox{\mdframecolorBox}{$(\bA\odot \bB )^\top (\bC\odot \bD )= 
		(\bA^\top\bC) \hadaprod (\bB^\top\bD)$}.}
Similarly, given  $\bA,\bC \in \real^{I\times K}$ and $\bB, \bD\in \real^{J\times K}$, it follows that 
\begin{equation}
	(\bA\odot \bB )^\top (\bC\odot \bD ) = 
	(\bA^\top\bC) \hadaprod (\bB^\top\bD).
\end{equation}

\paragraph{\colorbox{\mdframecolorBox}{$(\bA\otimes\bB)(\bC\odot \bD)=(\bA\bC)\odot(\bB\bD)$}.}
To see this, given $\bA\in\real^{I\times J}$, $\bB\in\real^{K\times L}$, $\bC\in\real^{J\times P}$, and $\bD\in\real^{L\times P}$, then  $(\bC\odot \bD)$ has a shape of ${JL\times P}$. Each column of $(\bC\odot \bD)$ is the Kronecker product of the corresponding columns of $\bC$ and $\bD$. Specifically, the $p$-th column of $(\bC\odot \bD)$ is $\bc_p\otimes \bd_p$, $p\in\{1,2,\ldots, P\}$. 
Therefore, the $p$-th column of the left-hand side, by Equation~\eqref{equation:kronecker_eq3}, is
$$
(\bA\otimes\bB)(\bc_p\otimes \bd_p)=(\bA\bc_p)\otimes(\bB \bd_p).
$$
For the right-hand side, the $p$-th columns of the matrices $(\bA\bC)$ and $(\bB\bD)$ are $\bA\bc_p$ and $\bB\bd_p$, respectively. Hence, the $p$-th column of the Khatri-Rao product  $(\bA\bC)\odot(\bB\bD)$ is 
$$
(\bA\bc_p)\otimes (\bB\bd_p).
$$
Since the $p$-th columns of both sides of the equation are identical for all  $p\in\{1,2,\ldots, P\}$, this concludes the proof that:
$$
(\bA\otimes\bB)(\bC\odot \bD)=(\bA\bC)\odot(\bB\bD).
$$

To conclude, it follows that
\begin{align}
	&	(\ba\otimes \bb)^\top(\bc\otimes \bd)&=&(\ba^\top\bc)(\bb^\top\bd);\\
	&	(\ba\otimes \bb)^\top(\ba\otimes \bb) &=&\norm{\ba}^2 \cdot \norm{\bb}^2;\\
	&	(\bA\otimes \bB)^\top (\bC\otimes \bD) &=& (\bA^\top\bC) \otimes (\bB^\top\bD), \gap \left(\text{\parbox{11em}{with $\bA,\bC$ same shape, \\$\bB,\bD$ same shape}}\right);\\
	&(\bA\otimes \bB) (\bC\otimes \bD) &=& (\bA\bC) \otimes (\bB\bD);\label{equ:mat_kro_prod}\\
	&(\bA\otimes \bB) (\bc \otimes \bd) &=& (\bA\bc) \otimes (\bB\bd);\\
	& (\bA\otimes \bB)^+ &=& \bA^+ \otimes \bB^+;\\
	&	\bA\odot \bB\odot \bC &=& (\bA\odot \bB)\odot \bC \\
	&\gap	&=& \bA\odot (\bB\odot \bC);\\
	&	(\bA\odot \bB )^\top (\bA\odot \bB ) &=& 	(\bA^\top\bA) \hadaprod (\bB^\top\bB);\label{equ:aa_hada_bb}\\
	&(\bA\odot \bB )^\top (\bC\odot \bD ) &=&  (\bA^\top\bC) \hadaprod (\bB^\top\bD);\label{equ:ab_top_cd}\\
	&(\bA\otimes\bB)(\bC\odot \bD) &=& (\bA\bC)\odot(\bB\bD).\label{equ:atimesb_codotb}
\end{align}

We conclude this section by describing that the ranks for Hadamard products and Kronecker products are ``multiplicative", and the rank for Khatri-Rao products is nondecreasing.
\begin{theorem}[Rank of Hadamard Products, \citep{horn1990hadamard}]\label{theorem:rank_hada_prod}
	If $\bA_1,\bA_2\in\real^{m\times n}$ are rank-$r_1$ and rank-$r_2$, respectively,
	then their Hadamard product $\bA_1\hadaprod\bA_2$ has rank at most $r_1\cdot r_2$.
\end{theorem}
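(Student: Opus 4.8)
The plan is to exploit the connection between Hadamard products and Kronecker products via the fact that $\bA_1 \hadaprod \bA_2$ is a principal submatrix of $\bA_1 \otimes \bA_2$, which was noted in Definition~\ref{definition:kronecker-product}. First I would write rank-revealing factorizations $\bA_1 = \bB_1\bC_1^\top$ with $\bB_1\in\real^{m\times r_1}$, $\bC_1\in\real^{n\times r_1}$, and $\bA_2 = \bB_2\bC_2^\top$ with $\bB_2\in\real^{m\times r_2}$, $\bC_2\in\real^{n\times r_2}$. The key observation is then the entrywise identity
$$
(\bA_1\hadaprod\bA_2)_{ij} = (\bA_1)_{ij}(\bA_2)_{ij} = \left(\sum_{k=1}^{r_1}(\bB_1)_{ik}(\bC_1)_{jk}\right)\left(\sum_{l=1}^{r_2}(\bB_2)_{il}(\bC_2)_{jl}\right) = \sum_{k=1}^{r_1}\sum_{l=1}^{r_2}\big((\bB_1)_{ik}(\bB_2)_{il}\big)\big((\bC_1)_{jk}(\bC_2)_{jl}\big).
$$
This exhibits $\bA_1\hadaprod\bA_2$ as a sum of $r_1 r_2$ rank-one matrices, one for each pair $(k,l)$, namely the outer product of the vector with entries $(\bB_1)_{ik}(\bB_2)_{il}$ (indexed by $i$) and the vector with entries $(\bC_1)_{jk}(\bC_2)_{jl}$ (indexed by $j$). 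Hence $\rank(\bA_1\hadaprod\bA_2)\le r_1 r_2$.

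Equivalently, and perhaps more cleanly, I would package this as follows: let $\bB_1\odot_{\text{col}}\bB_2$ denote the $m\times(r_1r_2)$ matrix whose columns are all pairwise Hadamard products of a column of $\bB_1$ with a column of $\bB_2$ (this is the "transposed Khatri--Rao" / column-wise Hadamard construction), and similarly form the $n\times(r_1r_2)$ matrix from $\bC_1,\bC_2$. Then the identity above says precisely
$$
\bA_1\hadaprod\bA_2 = (\bB_1\star\bB_2)(\bC_1\star\bC_2)^\top,
$$
where $\star$ is the column-wise Hadamard product. Since the left factor has $r_1r_2$ columns and the right factor has $r_1r_2$ columns, the product has rank at most $r_1r_2$. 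Alternatively one can cite the submatrix route directly: $\rank(\bA_1\otimes\bA_2) = \rank(\bA_1)\rank(\bA_2) = r_1r_2$ by the eigenvalue/factorization structure of the Kronecker product (equation~\eqref{equ:mat_kro_prod} gives $(\bB_1\otimes\bB_2)(\bC_1\otimes\bC_2)^\top$ as a rank factorization of $\bA_1\otimes\bA_2$), and the rank of a submatrix never exceeds the rank of the ambient matrix, so $\rank(\bA_1\hadaprod\bA_2)\le r_1r_2$.

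I do not anticipate a serious obstacle here — the result is essentially bookkeeping once the factorization $\bA_i = \bB_i\bC_i^\top$ is in place. The only point requiring a little care is making the combinatorial reindexing of the double sum $\sum_{k,l}$ into a single sum over $r_1r_2$ rank-one terms fully explicit, and (if one takes the submatrix route) justifying that $\bA_1\hadaprod\bA_2$ really sits as a principal submatrix of $\bA_1\otimes\bA_2$ by identifying the correct row/column index set (the "diagonal" blocks picked out by repeated indices). I would present the direct factorization argument as the main proof since it is self-contained and does not even require the Kronecker rank identity, and mention the submatrix argument as a one-line alternative.
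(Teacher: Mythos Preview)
Your main argument is correct and is essentially the same as the paper's: write each $\bA_i$ as a sum of $r_i$ rank-one outer products via a rank factorization, then distribute the Hadamard product over the double sum to express $\bA_1\hadaprod\bA_2$ as a sum of $r_1r_2$ rank-one matrices. The paper presents this at the level of the sum $\sum_i\sum_j(\bc_{1i}\bd_{1i}^\top)\hadaprod(\bc_{2j}\bd_{2j}^\top)$ rather than entrywise, but the content is identical; your Kronecker-submatrix alternative is a nice extra that the paper does not use.
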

\begin{proof}[of Theorem~\ref{theorem:rank_hada_prod}]
	For any rank-$r_1$ matrix $\bA_1$ and rank-$r_2$ matrix $\bA_2$, it can be expressed as sum of rank-$1$ matrices: 
	$$
	\begin{aligned}
		\bA_1&=\bC_1\bD_1^\top = \sum_{i=1}^{r_1} \bc_{1i}\bd_{1i}^\top;\\
		\bA_2&=\bC_2\bD_2^\top = \sum_{j=1}^{r_2} \bc_{2j}\bd_{2j}^\top,
	\end{aligned}
	$$
	where $\bC_1\in\real^{m\times r_1}$ and $\bD_1\in\real^{n\times r_1}$ are rank-$r_1$ matrices,  $\bC_2\in\real^{m\times r_2}$ and $\bD_2\in\real^{n\times r_2}$ are rank-$r_2$ matrices, and $\bc_{1i}, \bd_{1i}$ are the $i$-th column of $\bC_1, \bD_1$ (same for $\bc_{2j},\bd_{2j}$, see Figure~\ref{fig:hada_rank}).
	Therefore, 
	$$
	\bA_1\hadaprod\bA_2 = \left(\sum_{i=1}^{r_1} \bc_{1i}\bd_{1i}^\top\right)\hadaprod  \left(\sum_{j=1}^{r_2} \bc_{2j}\bd_{2j}^\top\right)
	=\sum_{i=1}^{r_1}\sum_{j=1}^{r_2} \left(\bc_{1i}\bd_{1i}^\top \right) \hadaprod \left(\bc_{2j}\bd_{2j}^\top\right).
	$$
	The Hadamard product is the sum of $r_1\cdot r_2$ rank-1 matrices, and thus has rank at most $r_1\cdot r_2$.
\end{proof}

\begin{figure}[h]
	\centering
	\includegraphics[width=1\textwidth]{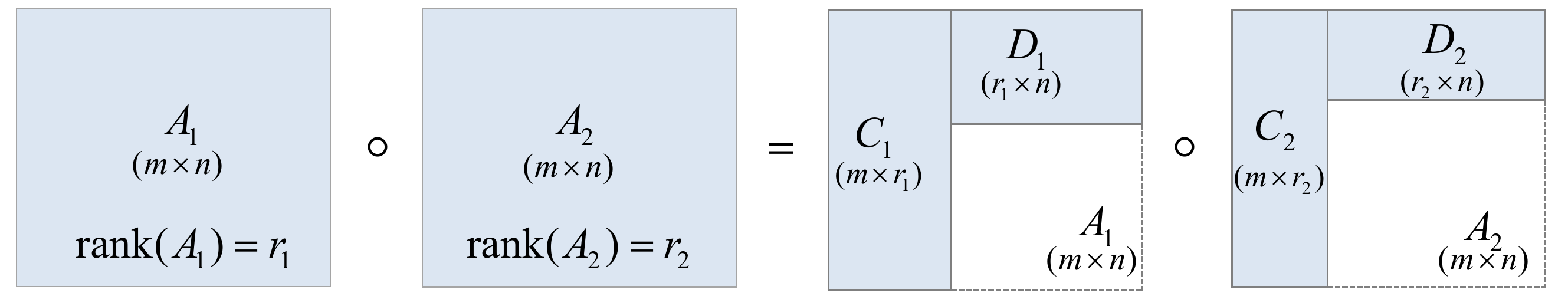}
	\caption{Diagram illustrating the rank in a Hadamard product.}
	\label{fig:hada_rank}
\end{figure}

\begin{theorem}[Rank, Trace of Kronecker Products]\label{theorem:rank_kronec_prod}
	Given matrices $\bA \in \real^{I\times J}$ and $\bB\in \real^{K\times L}$, then 
	$$
	\rank(\bA\otimes \bB) = \rank(\bA)\rank(\bB).
	$$
	That is, real rank is multiplicative under Kronecker product.
	When $\bA$ and $\bB$ are square and symmetric, then we also have 
	$$
	\trace(\bA\otimes \bB) = \trace(\bA)\trace(\bB)
	$$
\end{theorem}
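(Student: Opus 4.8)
The plan is to reduce both identities to the multiplicative behaviour of Kronecker products on structured factors, using the mixed-product identity~\eqref{equ:mat_kro_prod}, the transpose rule~\eqref{equation:kro_trans_pro}, and Lemma~\ref{lemma:krokecker_keep_special}.

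For the rank identity, I would first take (reduced) SVDs $\bA = \bU_A\bSigma_A\bV_A^\top$ and $\bB = \bU_B\bSigma_B\bV_B^\top$, where $\bSigma_A$ and $\bSigma_B$ are (rectangular) diagonal with exactly $\rank(\bA)$ and $\rank(\bB)$ nonzero entries respectively. Applying the mixed-product rule~\eqref{equ:mat_kro_prod} twice, together with~\eqref{equation:kro_trans_pro} to pull the transpose through, gives
\[
\bA\otimes\bB = (\bU_A\otimes\bU_B)\,(\bSigma_A\otimes\bSigma_B)\,(\bV_A\otimes\bV_B)^\top .
\]
By Lemma~\ref{lemma:krokecker_keep_special}, $\bU_A\otimes\bU_B$ and $\bV_A\otimes\bV_B$ are orthogonal and $\bSigma_A\otimes\bSigma_B$ is diagonal, so this is a singular value decomposition of $\bA\otimes\bB$ (the diagonal entries may require reordering, which does not change how many are nonzero). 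Hence $\rank(\bA\otimes\bB)$ equals the number of nonzero entries of $\bSigma_A\otimes\bSigma_B$; since the $i$-th diagonal block of $\bSigma_A\otimes\bSigma_B$ equals $(\sigma^A_i)\,\bSigma_B$ and contributes $\rank(\bB)$ nonzeros precisely when $\sigma^A_i\neq 0$, this count is $\rank(\bA)\cdot\rank(\bB)$. An equivalent route, avoiding SVD, uses rank factorizations $\bA=\bC_A\bD_A^\top$, $\bB=\bC_B\bD_B^\top$ with full-column-rank factors: then $\bA\otimes\bB=(\bC_A\otimes\bC_B)(\bD_A\otimes\bD_B)^\top$, and since $(\bC_A\otimes\bC_B)^\top(\bC_A\otimes\bC_B)=(\bC_A^\top\bC_A)\otimes(\bC_B^\top\bC_B)$ by~\eqref{equation:kron_pro1_trans} is nonsingular (Lemma~\ref{lemma:krokecker_keep_special}), the factor $\bC_A\otimes\bC_B$ — and likewise $\bD_A\otimes\bD_B$ — has full column rank $\rank(\bA)\rank(\bB)$, so the product does too.

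For the trace identity, I would use the block structure from Definition~\ref{definition:kronecker-product}: the diagonal blocks of $\bA\otimes\bB$ are $a_{11}\bB, a_{22}\bB, \ldots$, so
\[
\trace(\bA\otimes\bB) = \sum_{i} \trace(a_{ii}\bB) = \Big(\sum_i a_{ii}\Big)\trace(\bB) = \trace(\bA)\trace(\bB),
\]
which in fact only uses squareness, not symmetry. If one prefers the eigenvalue viewpoint (where symmetry is genuinely convenient, giving a full real eigenbasis), the Lemma on eigenvalues of Kronecker products shows that the eigenvalues of $\bA\otimes\bB$ are exactly the products $\lambda_i\mu_j$ over eigenvalues $\lambda_i$ of $\bA$ and $\mu_j$ of $\bB$, and summing yields $\big(\sum_i\lambda_i\big)\big(\sum_j\mu_j\big)=\trace(\bA)\trace(\bB)$.

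There is no serious obstacle here: the only point needing a line of care is the justification that the displayed factorization of $\bA\otimes\bB$ is honestly an SVD, i.e.\ that the Kronecker factors inherit orthogonality and diagonality — and this is exactly what Lemma~\ref{lemma:krokecker_keep_special} supplies. Everything else is bookkeeping about counting nonzero diagonal entries and summing block diagonals.
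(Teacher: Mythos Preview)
Your proof is correct. For the rank identity, your argument is essentially the paper's: both take SVDs of $\bA$ and $\bB$, apply the mixed-product rule~\eqref{equ:mat_kro_prod} to factor $\bA\otimes\bB$ through orthogonal Kronecker factors and the diagonal $\bSigma_A\otimes\bSigma_B$, invoke Lemma~\ref{lemma:krokecker_keep_special} for orthogonality, and then count nonzero diagonal entries. (The paper phrases it as $\bSigma_1\otimes\bSigma_2$ and $\bA\otimes\bB$ being related by orthogonal matrices on each side and hence of equal rank; you phrase it as exhibiting an SVD of $\bA\otimes\bB$---same content.) Your alternative rank-factorization route is a nice addition the paper does not give. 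For the trace identity you diverge: your primary argument reads the trace directly off the block description in Definition~\ref{definition:kronecker-product}, which is more elementary and, as you observe, needs only squareness. The paper instead reuses the SVD framework---under symmetry the SVD is a spectral decomposition, so $\bA\otimes\bB$ is genuinely similar (in the square sense) to $\bSigma_1\otimes\bSigma_2$, and similar matrices share a trace. Your direct computation is cleaner and more general; the paper's route has the modest virtue of handling both claims with a single decomposition.
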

\begin{proof}[of Theorem~\ref{theorem:rank_kronec_prod}]
Suppose $\bA$ and $\bB$ admit full SVD $\bU_1^\top\bA\bV_1=\bSigma_1$ and $\bU_2^\top\bB\bV_2=\bSigma_2$, respectively. 
Applying Equality~\eqref{equ:mat_kro_prod}, we have 
$$
\begin{aligned}
\bSigma_1\otimes \bSigma_2 &= (\bU_1^\top\bA\bV_1)\otimes (\bU_2^\top\bB\bV_2)\\
&= (\bU_1^\top\otimes \bU_2^\top) (\bA\bV_1\otimes \bB\bV_2)
=(\bU_1^\top\otimes \bU_2^\top)(\bA\otimes \bB)(\bV_1\otimes \bV_2).
\end{aligned}
$$
Since the Kronecker product of orthogonal matrices is also orthogonal (Lemma~\ref{lemma:krokecker_keep_special}), $(\bSigma_1\otimes \bSigma_2)$ and $(\bA\otimes \bB)$ are similar matrices.  We have, by the fact that similar matrices have the same rank,
$$
\rank(\bSigma_1\otimes \bSigma_2) = \rank(\bA\otimes \bB),
$$
where $\bSigma_1\otimes \bSigma_2$ has $\rank(\bSigma_1)\rank(\bSigma_2)$ nonzero entries, indicating that $\rank(\bSigma_1\otimes \bSigma_2)=\rank(\bSigma_1)\rank(\bSigma_2)$.

For the second part, since $\bA,\bB$ are symmetric, $\bSigma_1\otimes \bSigma_2$ and $\bA\otimes \bB$ are similar matrices, indicating their traces are the same~\footnote{The trace and rank of $\bA$ are equal to those of matrix $\bP\bA\bP^{-1}$ for any nonsingular matrix $\bP$. And the trace of a symmetric matrix is the sum of its eigenvalues (singular values).}.
This completes the proof.
\end{proof}

\begin{theorem}[Rank of Khatri-Rao Products]\label{theorem:rank_khatri_prod}
Given matrices $\bA \in \real^{I\times K}$ and $\bB\in \real^{J\times K}$, then 
$$
\rank(\bA\odot  \bB) \geq  \max\{\rank(\bA),\rank(\bB)\}.
$$
\end{theorem}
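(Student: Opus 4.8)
The plan is to exploit the mixed-product identity~\eqref{equ:atimesb_codotb}, namely $(\bX\otimes\bY)(\bC\odot\bD)=(\bX\bC)\odot(\bY\bD)$, to ``contract'' one factor out of $\bA\odot\bB$ while keeping the rank under control. First I would apply it with $\bX=\bI_I$, $\bY=\bv^\top$ for an arbitrary row vector $\bv^\top\in\real^{1\times J}$, $\bC=\bA$, $\bD=\bB$, obtaining
$$
(\bI_I\otimes\bv^\top)(\bA\odot\bB)=\bA\odot(\bv^\top\bB).
$$
Since $\bv^\top\bB$ is a $1\times K$ row vector whose $k$-th entry is $\bv^\top\bb_k$, the right-hand side has $k$-th column $(\bv^\top\bb_k)\,\ba_k$, i.e. it equals $\bA\,\diag(\bB^\top\bv)$. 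Because left-multiplication cannot increase rank, this gives $\rank(\bA\odot\bB)\ge\rank\big(\bA\,\diag(\bB^\top\bv)\big)$ for every $\bv$.

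Next I would choose $\bv$ so that $\diag(\bB^\top\bv)$ is nonsingular, whence $\rank(\bA\,\diag(\bB^\top\bv))=\rank(\bA)$. As long as no column of $\bB$ is the zero vector, each ``bad'' locus $\{\bv:\bb_k^\top\bv=0\}$ is a proper subspace of $\real^J$, and finitely many proper subspaces cannot cover $\real^J$, so a suitable $\bv$ exists; this yields $\rank(\bA\odot\bB)\ge\rank(\bA)$. By the symmetric choice $\bX=\bw^\top$, $\bY=\bI_J$ in~\eqref{equ:atimesb_codotb} I would get $(\bw^\top\otimes\bI_J)(\bA\odot\bB)=\bB\,\diag(\bA^\top\bw)$, and a generic $\bw$ (valid when $\bA$ has no zero column) gives $\rank(\bA\odot\bB)\ge\rank(\bB)$. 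Combining the two bounds yields $\rank(\bA\odot\bB)\ge\max\{\rank(\bA),\rank(\bB)\}$.

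The hard part will be the non-vanishing-of-columns hypothesis rather than any computation: if, say, $\bB$ has a zero column, then the corresponding column of $\bA\odot\bB$ is zero and the bound $\rank(\bA\odot\bB)\ge\rank(\bA)$ can fail outright (take $\bA=\bI_2$ and $\bB=[\be_1\ \ \bzero]$, for which $\rank(\bA\odot\bB)=1<2$). So I would read the statement under the standard assumption that $\bA$ and $\bB$ have no zero columns — which is automatic in the factor-matrix settings where Khatri-Rao products arise — under which the generic $\bv$ and $\bw$ above work verbatim. If one prefers to avoid genericity, one can instead note that $\bA\odot\bB$ and $\bB\odot\bA$ differ only by a row permutation (the perfect-shuffle matrix) and hence have equal rank, so only the single inequality $\rank(\bA\odot\bB)\ge\rank(\bA)$ must be proved; but a nonsingular $\diag(\bB^\top\bv)$ is still needed, so the column assumption cannot be dropped.
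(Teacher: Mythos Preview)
Your approach is essentially the paper's: the paper left-multiplies by a nonsingular $\bP\otimes\bT$ (chosen so that $\bT\bB$ has an all-nonzero row) and then observes that $(\bP\bA)\odot(\bT\bB)$ contains, as a row-submatrix, a column-rescaled copy of $\bP\bA$ --- your choice $\bI_I\otimes\bv^\top$ is just the streamlined form that isolates the single row that actually matters, yielding $\bA\,\diag(\bB^\top\bv)$ directly. Your caveat about zero columns is correct and applies equally to the paper's argument: its opening assertion that any matrix can be row-transformed to have an all-nonzero row fails precisely when a column is zero, so both proofs tacitly require $\bA$ and $\bB$ to have no zero columns.
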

\begin{proof}[of Theorem~\ref{theorem:rank_khatri_prod}]
For any matrix $\bC$, we can find a nonsingular $\bS$ such that there exists a row of $\bS\bC$ with all nonzeros.
Therefore, assume there exist nonsingular matrices $\bP$ and $\bT$ such that $\bP\bA$ and $\bT\bB$ have at least one row containing all nonzero elements.
Since $\rank(\bP\bA)=\rank(\bA)$, $\rank(\bT\bB)=\rank(\bB)$, and $(\bP\bA)\odot (\bT\bB) = (\bP\otimes \bT)(\bA\odot \bB)$ with $\rank(\bA\odot \bB) = \rank((\bP\otimes \bT)(\bA\odot \bB))$ (Lemma~\ref{lemma:left_mul_krank}, Equation~\eqref{equ:atimesb_codotb}, and $(\bP\otimes \bT)$ is also nonsingular from Lemma~\ref{lemma:krokecker_keep_special}), it suffices to show that $\rank(\bP\bA\odot \bT\bB) \geq \max\{\rank(\bP\bA),\rank(\bT\bB)\}$.

Since $\bT\bB$ contains a row that has all nonzero elements, then $(\bP\bA\odot \bT\bB)$ contains a submatrix equal to $\bP\bA$, rescaled columnwise by the elements of that row of $\bT\bB$. Therefore, $\rank(\bP\bA\odot \bT\bB) \geq \rank(\bP\bA)$. Similarly, we can prove $\rank(\bP\bA\odot \bT\bB) \geq \rank(\bT\bB)$. This completes the proof.
\end{proof}

In many cases, we may also consider a special rank known as  $k$-Rank or Kruskal rank, which captures specific structural dimensions of matrices.  $k$-ranks appear in the formulation
of the famous Kruskal condition for the uniqueness of  CANDECOMP-PARAFAC decomposition \citep{carroll1970analysis,harshman1970foundations, de2008decompositionspart}.
\begin{definition}[$k$-Rank or Kruskal Rank]
	The Kruskal rank or $k$-rank of a matrix $\bA$, denoted by $\rank_k(\bA)$ or $k_{\bA}$, is the largest  number $r$ such that any set of $r$ columns of $\bA$ is linearly independent.
\end{definition}
When the matrix has a $k$-rank of $\sigma$, this means that no column in any subset of size $\sigma$ can be expressed as a linear combination of the others in that subset.
Apparently, a rank-$r$ matrix can have a $k$-rank of 1 if there are two identical columns in the matrix. 
Thus, the  $k$-rank provides insight into a specific type of structural redundancy within the matrix.
Specifically, 
\begin{itemize}
	\item When the matrix has full column rank, then adding columns to the matrix may increase or decrease the $k$-rank.
	\item When the matrix does not have full column rank, then adding columns to the matrix can never increase the $k$-rank.
\end{itemize}

\begin{lemma}[Left-Multiplying with Nonsingular]\label{lemma:left_mul_krank}
Left-multiplying a matrix by a nonsingular matrix can be understood as applying the same nonsingular transformation to each column of the original matrix. This transformation preserves the linear independence of the columns; thus, the linear dependence relationships among the columns remain unchanged.
Therefore, left-multiplying a matrix with a nonsingular matrix will not change either the rank or $k$-rank of the resulting matrix.
\end{lemma}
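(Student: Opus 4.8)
The plan is to reduce both claims---invariance of rank and of $k$-rank---to a single observation about how $\bS$ acts on linear combinations of the columns. First I would write $\bC = [\bc_1, \bc_2, \ldots, \bc_n]$ so that $\bS\bC = [\bS\bc_1, \bS\bc_2, \ldots, \bS\bc_n]$; that is, left-multiplication by $\bS$ simply replaces every column $\bc_j$ by $\bS\bc_j$. Then for any index set $J \subseteq \{1,\ldots,n\}$ and any scalars $\{\alpha_j\}_{j\in J}$ I would use the identity $\sum_{j\in J}\alpha_j\,\bS\bc_j = \bS\big(\sum_{j\in J}\alpha_j\,\bc_j\big)$. Since $\bS$ is nonsingular, $\nspace(\bS)=\{\bzero\}$, so the left-hand side is $\bzero$ if and only if $\sum_{j\in J}\alpha_j\,\bc_j=\bzero$. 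Hence $\{\bS\bc_j\}_{j\in J}$ is linearly independent exactly when $\{\bc_j\}_{j\in J}$ is, and more generally the collection of linear-dependence relations among the columns is exactly the same for $\bC$ and $\bS\bC$. I would state this column-indexed equivalence as its own displayed claim so that it can be cited verbatim twice.

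With that equivalence in hand, the two rank statements follow immediately. The rank of a matrix equals the largest size of a linearly independent subset of its columns; since linear independence of each subset is preserved, the maximum such size is the same for $\bC$ and $\bS\bC$, giving $\rank(\bS\bC)=\rank(\bC)$. (Alternatively, one could observe directly that $\bS\bC\bx=\bzero \iff \bC\bx=\bzero$, so $\nspace(\bS\bC)=\nspace(\bC)$, and invoke rank--nullity.) The $k$-rank is the largest $r$ such that \emph{every} subset of $r$ columns is linearly independent; again the same subsets of columns of $\bC$ and of $\bS\bC$ are simultaneously independent or dependent, so the two $k$-ranks coincide.

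There is no genuine obstacle here---the result is a one-line consequence of the injectivity of $\bS$---so the only point requiring care is to phrase the column-indexed equivalence precisely enough that it applies to both the rank characterization (existence of one large independent subset) and the $k$-rank characterization (all subsets of a given size being independent). I would therefore isolate that equivalence and keep the two deductions that follow it completely parallel.
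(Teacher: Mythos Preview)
Your proposal is correct and matches the paper's reasoning exactly: the paper does not give a separate proof for this lemma but embeds the justification in the statement itself (left-multiplication applies the same nonsingular transformation columnwise, hence preserves all linear-dependence relations among columns, hence preserves both rank and $k$-rank). Your write-up simply makes that one-line argument explicit and careful, which is entirely appropriate.
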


\begin{theorem}[$k$-Rank of Khatri-Rao Products]\label{theorem:kkrank_khatri_prod}
	Given matrices $\bA \in \real^{I\times K}$ and $\bB\in \real^{J\times K}$, then 
	$$
	\rank_k(\bA\odot  \bB) \geq  \min\{\rank_k(\bA)+\rank_k(\bB)-1, K\}.
	$$
\end{theorem}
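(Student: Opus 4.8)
The plan is to adapt the standard argument for this Khatri--Rao $k$-rank bound: fix an arbitrary set of
$$
r:=\min\{\rank_k(\bA)+\rank_k(\bB)-1,\ K\}
$$
columns of $\bA\odot\bB$ and show that these $r$ columns are linearly independent; since $r\le K$, this is exactly what is needed to conclude $\rank_k(\bA\odot\bB)\ge r$. Throughout write $k_A:=\rank_k(\bA)$ and $k_B:=\rank_k(\bB)$, and note that we must assume (as is implicit in the statement) that $\bA$ and $\bB$ have no zero columns, so that $k_A,k_B\ge 1$ and hence $r\ge 1$; otherwise the inequality can fail, as explained at the end.

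Let $S$ be an index set with $|S|=r$ and suppose $\sum_{i\in S}x_i(\ba_i\otimes\bb_i)=\bzero$ for some scalars $x_i$. Put $T:=\{i\in S:x_i\ne 0\}$ and $d:=|T|$; the goal is to force $d=0$. The first step is to convert this column relation into a matrix equation: since $\ba_i\otimes\bb_i=\mathrm{vec}(\bb_i\ba_i^\top)$ with $\bb_i\ba_i^\top\in\real^{J\times I}$ and $\mathrm{vec}$ stacking columns, the relation (which only involves the indices in $T$) is equivalent to
$$
\bB_T\,\bD\,\bA_T^\top=\bzero,
$$
where $\bA_T\in\real^{I\times d}$ and $\bB_T\in\real^{J\times d}$ collect the columns of $\bA$ and $\bB$ indexed by $T$, and $\bD:=\diag\big((x_i)_{i\in T}\big)$ is nonsingular.

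The remaining work is to squeeze $d$ between an upper and a lower bound on $\rank(\bA_T)+\rank(\bB_T)$. For the upper bound, $\bB_T\bD$ has the same rank as $\bB_T$ (rescaling columns by nonzero scalars changes nothing; cf.\ Lemma~\ref{lemma:left_mul_krank} applied to transposes), so Sylvester's rank inequality applied to the product $(\bB_T\bD)\bA_T^\top=\bzero$ (inner dimension $d$) gives $\rank(\bB_T)+\rank(\bA_T)-d\le 0$, i.e.\ $\rank(\bA_T)+\rank(\bB_T)\le d$. For the lower bound, any $\min\{d,k_A\}$ columns of $\bA_T$ constitute at most $k_A$ columns of $\bA$ and are therefore linearly independent by the definition of $k$-rank, whence $\rank(\bA_T)\ge\min\{d,k_A\}$, and similarly $\rank(\bB_T)\ge\min\{d,k_B\}$. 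Combining, $\min\{d,k_A\}+\min\{d,k_B\}\le d$. Now assume $d\ge 1$ for contradiction; since $T\subseteq S$ we have $d\le r\le k_A+k_B-1$, and a four-way case check on whether $d\le k_A$ and whether $d\le k_B$ (using $k_A,k_B\ge 1$ in the mixed cases and $d\le k_A+k_B-1$ when $d$ exceeds both) shows $\min\{d,k_A\}+\min\{d,k_B\}\ge d+1$, contradicting the previous line. Hence $d=0$, all $x_i$ vanish, the $r$ chosen columns are independent, and since $S$ was arbitrary, $\rank_k(\bA\odot\bB)\ge r$.

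I expect the crux to be the passage from the vectorized column dependence to the matrix identity $\bB_T\bD\bA_T^\top=\bzero$ together with the extraction of $\rank(\bA_T)+\rank(\bB_T)\le d$ from it; once this ``rank-squeeze'' inequality is available, the $k$-rank lower bounds and the terminal arithmetic are routine (one could equally replace Sylvester's inequality by a direct dimension count: the columns of $\bA_T^\top$ all lie in the null space of $\bB_T\bD$). A secondary point deserving explicit mention is the no-zero-column hypothesis: if, say, $\bA$ has a zero column while $\rank_k(\bB)\ge 2$, then $\bA\odot\bB$ has a zero column and $\rank_k(\bA\odot\bB)=0<\min\{k_B-1,K\}$, so the bound genuinely requires $k_A,k_B\ge 1$.
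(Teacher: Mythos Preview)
Your proof is correct and follows essentially the same route as the paper: both convert a linear dependence among columns of $\bA\odot\bB$ into a matrix identity of the form $\bB_T\bD\bA_T^\top=\bzero$ via the $\mathrm{vec}$ relation, then apply Sylvester/Frobenius to obtain $\rank(\bA_T)+\rank(\bB_T)\le |T|$ and finish with the $k$-rank lower bounds on the submatrices. The only cosmetic difference is that the paper works with a \emph{minimal} dependent set (so all coefficients are nonzero from the start and $|T|=S$), whereas you fix an arbitrary set of $r$ columns and pass to the support $T$; your explicit $\min\{d,k_A\}$ case analysis and the no-zero-column caveat are in fact a bit more careful than the paper's shortcut $\rank_k(\bA_S)\ge\rank_k(\bA)$.
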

\begin{proof}[of Theorem~\ref{theorem:kkrank_khatri_prod}]
The proof is based on \citet{ten2000k}.
Without loss of generality, we assume $k$-rank of $\bA\odot \bB$ is less than $K$; otherwise the proof is trivial.
Let $S$ be the smallest number of linearly dependent columns of $\bA\odot \bB$. Therefore, $\rank_k(\bA\odot  \bB)  = S-1$.
We collect some set of $S$ linearly dependent columns of $\bA\odot  \bB$ into $\bA_S\odot \bB_S$, where $\bA_S$ and $\bB_S$ contain the corresponding columns of $\bA$ and $\bB$, respectively. 
Therefore, there exists a vector $\bc_S$ with nonzero elements satisfying $(\bA_S\odot \bB_S)\bc_S=\bzero$; otherwise we can reduce the number $S$. 
This implies $\bA_S\bC_S\bB_S^\top=\bzero $, where $\bC_S=\diag(\bc_S)$ is nonsingular.
Therefore, we have 
$$
0=\rank(\bA_S\bC_S\bB_S^\top)\geq \rank(\bA_S)+\rank(\bB_S)-S.~\footnote{Follows from the Frobenius inequality: $\rank(\bA\bB\bC)\geq \rank(\bA\bB)+\rank(\bB\bC)-\rank(\bB)$.}
$$
Since $\rank(\bA_S)\geq \rank_k(\bA_S)\geq \rank_k(\bA)$ and $\rank(\bB_S)\geq \rank_k(\bB_S)\geq \rank_k(\bB)$, we complete the proof.
\end{proof}

\section{Low-Rank Hadamard Decomposition}\label{section:low_rank_hadamard}

Ws further consider the Hadamard decomposition of a matrix $\bA$ such that $\bA$ can be represented as the Hadamard product of two low-rank matrices: $\bA=\bA_1\hadaprod \bA_2$.
\paragraph{Non-Factorizability  Issue.}
Although when $\bA_1\in\real^{n^2\times n^2}$ and $\bA_2\in\real^{n^2\times n^2}$ share the same rank $n$, the Hadamard product $\bA_1\hadaprod \bA_2$ can have a maximum rank of $n^2$, not any arbitrary matrix $\bA\in\real^{n^2\times n^2}$ of rank-$n^2$  may have a representation as the Hadamard product of two lower-rank matrices:
\begin{itemize}
	\item The Hadamard decomposition $\bA = \bA_1 \hadaprod  \bA_2$, where $\bA_1$ and $\bA_2$ are rank-$n$ factors, encodes a system of nonlinear equations.
	\item This system includes $n^2 \times n^2 = n^4$ equations (one per entry of $ \bA$) and, due to the low-rank constraint on the two Hadamard factors $\bA_1$ and $ \bA_2$, only $(n^2 n + n n^2) = 2n^3$ variables exist.
	\item For $n > 2$, there are more equations than variables, suggesting that all the equations will be simultaneously satisfied only in special cases.
	For example, if the matrix $\bA$ includes a row or a column with all but a single entry being 0, then not all the equations in the system can be satisfied \citep{ciaperoni2024hadamard}.
	
\end{itemize}

Therefore, we focus on solving the low-rank reconstruction problem for the Hadamard decomposition.
In Theorem~\ref{theorem:rank_hada_prod}, assuming that  $\bA_1$ and $\bA_2$ have the same rank $K$, our goal is to reconstruct the design matrix $\bA$ through the Hadamard product $\bA_1\hadaprod\bA_2$. 
Building upon the matrix factorization approach used in ALS,
we now concentrate on algorithms for solving the \textit{low-rank Hadamard decomposition} problem  (we may refer to low-rank Hadamard decomposition simply as Hadamard decomposition for brevity when the context is clear; the same applies to the low-rank Kronecker and Khatri-Rao decompositions discussed later):
\begin{itemize}
	\item Given a real matrix $\bA\in \real^{M\times N}$, find  matrix factors $\bA_1\in \real^{M\times N}$ and $\bA_2\in \real^{M\times N}$ such that: 
	$$
	\min\,\,L(\bC_1, \bD_1, \bC_2,\bD_2) = \norm{\bA_1\hadaprod \bA_2-\bA}_F^2= \norm{(\bC_1\bD_1)\hadaprod (\bC_2\bD_2) -\bA}_F^2,
	$$
	where $\bC_1, \bC_2\in\real^{M\times K}$, and $\bD_1, \bD_2\in\real^{K\times N}$.~\footnote{In the proof of Lemma~\ref{theorem:rank_hada_prod}, we consider the matrices $\bD_1, \bD_2\in\real^{N\times K}$. To abuse notations, we use $\bD_1, \bD_2\in\real^{K\times N}$ here for the ease of deriving gradients in the sequel.}
\end{itemize}
The low-rank decomposition is generally necessary because many natural phenomena exhibit multiplicative or conjunctive relationships \citep{ciaperoni2024hadamard}.
For instance, consider a study on risk factors for a disease with two predictors: smoking status (yes/no) and alcohol consumption (yes/no). The multiplicative model would account for not only the main effects of smoking and alcohol consumption but also their interaction.
The (low-rank) Hadamard decomposition offers an alternative approach.

Following the alternating descent framework, in each iteration, the matrices $\bC_1, \bD_1, \bC_2$, and $\bD_2$ are updated sequentially by taking a step in the direction opposite to the gradient of the objective function.
It then can be shown that 
$$
\nabla L(\bC_1)=\nabla L(\bC_1|\bD_1, \bC_2,\bD_2)=2\left(\left((\bC_1\bD_1)\hadaprod (\bC_2\bD_2) -\bA\right)\hadaprod (\bC_2\bD_2)\right)\bD_1.
$$
\begin{proof}
	For brevity, we show the gradient of $\bA$ for $f(\bA) = \norm{\bA\bB\hadaprod \bC - \bD}_F^2$.
	We have 
	$$
	\begin{aligned}
		f(\bA)&=\norm{\bA\bB\hadaprod \bC - \bD}_F^2 
		= \trace\left((\bA\bB\hadaprod \bC - \bD)^\top(\bA\bB\hadaprod \bC - \bD) \right)\\
		&= \trace\left((\bA\bB\hadaprod \bC)^\top (\bA\bB\hadaprod \bC)\right)-2\trace\left((\bA\bB\hadaprod \bC)^\top\bD\right) + \trace(\bD^\top\bD).
	\end{aligned}
	$$
	Consider the first term, we have 
	$$
	\frac{\partial \trace\left((\bA\bB\hadaprod \bC)^\top (\bA\bB\hadaprod \bC)\right)}{\partial \bA}
	=2(\bA\bB)\hadaprod \bC\hadaprod \bC \cdot \bB^\top.
	~\footnote{Use the fact that $\frac{\partial \trace\left((\bE\hadaprod \bC)^\top(\bE\hadaprod \bC) \right)}{\partial \bE}=2\bE\hadaprod \bC\hadaprod \bC$, which can be derived element-wise.}
	$$
	For the second term, it follows that 
	$$
	-2\frac{\partial \trace\left((\bA\bB\hadaprod \bC)^\top\bD\right)}{\partial \bA}
	=-2\bD\hadaprod \bC \cdot \frac{\partial \bA\bB}{\partial \bA}
	=-2\bD\hadaprod \bC \cdot\bB^\top.
	~\footnote{Use the fact that $\frac{\partial\trace( (\bA\hadaprod \bC)^\top\bD )}{\partial \bA} = \bD\hadaprod \bC$, which can be derived element-wise. Since $\trace( (\bA\hadaprod \bC)^\top\bD)=\sum_{i,j} d_{ij}a_{ij}c_{ij}$ and thus $\frac{\partial \trace( (\bA\hadaprod \bC)^\top\bD)}{\partial a_{ij}}=d_{ij}c_{ij}$.}
	$$
	The third term is a constant w.r.t. to $\bA$. 
	Therefore, $\frac{\partial f(\bA)}{\bA} = 2(\bA\bB)\hadaprod \bC\hadaprod \bC \cdot \bB^\top-2\bD\hadaprod \bC \cdot\bB^\top.$
	Substituting with $\bA:=\bC_1$, $\bB:=\bD_1$, $\bC:=\bC_2\bD_2$, and $\bD:=\bA$ completes the proof.
\end{proof}
The gradients with respect to $\bD_1, \bC_2$, and $\bD_2$ can be derived analogously.
Therefore, the alternating descent method can be described by Algorithm~\ref{alg:ad_hadamad_svd} for obtaining the low-rank approximation of Hadamard decomposition.
We may observe that, unlike the ALS algorithm, the low-rank Hadamard decomposition does not admit a closed-form solution.

\begin{algorithm}[h] 
\caption{Alternating Descent with Gradient Descent for Hadamard Decomposition: A regularization can also be added into the gradient descent update (see Section~\ref{section:regularization-extention-general}).}
\label{alg:ad_hadamad_svd}
\begin{algorithmic}[1] 
\Require Matrix $\bA\in \real^{M\times N}$;
\State Initialize $\bC_1,\bC_2\in \real^{M\times K}$, and $\bD_1,\bD_2\in \real^{K\times N}$; 
\State Choose a stop criterion on the approximation error $\delta$;
\State Choose  step size $\eta$;
\State Choose the maximal number of iterations $C$;
\State $iter=0$; \Comment{Count for the number of iterations}
\While{$\norm{(\bC_1\bD_1)\hadaprod (\bC_2\bD_2) -\bA}_F^2>\delta $ and $iter<C$}
\State $iter=iter+1$; 
\State $\Delta = \left((\bC_1\bD_1)\hadaprod (\bC_2\bD_2) -\bA\right)$;
\State $\bC_1 = \bC_1-\eta \nabla L(\bC_1)=\bC_1-\eta\cdot 2\left(\Delta \hadaprod (\bC_2\bD_2)\right)\bD_1$;
\State $\bD_1 = \bD_1-\eta \nabla L(\bD_1)=\bD_1-\eta\cdot 2 \left\{\left(\Delta^\top \hadaprod (\bC_2\bD_2)^\top\right)\bC_1\right\}^\top$;
\State $\bC_2 = \bC_2-\eta \nabla L(\bC_2)=\bC_2 - \eta \cdot 2 \left(\Delta \hadaprod (\bC_1\bD_1)\right)\bD_2$;
\State $\bD_2 = \bD_2-\eta \nabla L(\bD_2)=\bD_2-\eta \cdot 2 \left\{\left( \Delta^\top \hadaprod (\bC_1\bD_1)^\top \right)\bC_2\right\}^\top$;

\EndWhile
\State Output $\bC_1,\bD_1, \bC_2,\bD_2$;
\end{algorithmic} 
\end{algorithm}

\subsection{Rank-One Update }
Following the rank-one update of ALS (Section~\ref{section:alt-columb-by-column}), we consider to update the $n$-th column $\bd_{1,n}$ of $\bD_1$, for $n\in\{1,2,\ldots, N\}$.
Analogously, we can obtain the gradient of $\bd_{1,n}$:
\begin{equation}\label{equation:hada_rkone1}
	\begin{aligned}
		\nabla L(\bd_{1,n}) 
		=\frac{\partial L(\bd_{1,n})}{\partial \bd_{1,n}}
		&= 2\bC_1^\top \left((\bC_1\bd_{1,n}) \hadaprod \ba_{2,n} \hadaprod \ba_{2,n}\right) - 2\bC_1^\top (\ba_n \hadaprod \ba_{2,n})\\
		&=2\bC_1^\top \left(\left[(\bC_1\bd_{1,n}) \hadaprod \ba_{2,n}-\ba_n \right] \hadaprod\ba_{2,n}\right), \gap n\in\{1,2,\ldots, N\}, 
	\end{aligned}
\end{equation}
where $\ba_{2,n}$ is the $n$-th column of $\bA_2=\bC_2\bD_2$, $n\in\{1,2,\ldots, N\}$. The gradient of the columns of $\bD_2$ can be calculated similarly.

Suppose further $\bC_1^\top=[\bc_{1,1}, \bc_{1,2}, \ldots, \bc_{1,M}]\in\real^{K\times M}$,  $\bB=\bA^\top=[\bb_1, \bb_2, \ldots, \bb_M]\in\real^{N\times M}$, and $\bB_2=\bA_2^\top=(\bC_2\bD_2)^\top=[\bb_{2,1}, \bb_{2,2}, \ldots, \bb_{2,M}]\in\real^{N\times M}$, i.e., the row partitions of $\bC_1$, $\bA$, and $\bA_2=(\bC_2\bD_2)$, respectively.
Then, the gradient of $\bc_{1,m}$ is 
\begin{equation}\label{equation:hada_rkone2}
	\nabla L(\bc_{1,m}) 
	=\frac{\partial L(\bc_{1,m})}{\partial \bc_{1,m}}
	= 2\bD_1 \left(\left[(\bD_1^\top\bc_{1,m}) \hadaprod \bb_{2,m}-\bb_m \right] \hadaprod\bb_{2,m}\right), \gap m\in\{1,2,\ldots, M\}.
\end{equation}
The gradient of the rows of $\bC_2$ can be obtained analogously.
Therefore,  Algorithm~\ref{alg:ad_hadamad_svd} can be modified to update the columns of $\bD_1, \bD_2$ and the rows of $\bC_1, \bC_2$ iteratively (referred to as the rank-one update).

\subsection{Missing Entries}
The rank-one update can be extended to the Netflix context, in which case many entries of $\bA$ are missing: assume $\bA$ is a low-rank matrix, we aim to fill in the missing entries of matrix $\bA$.

Again, let $\bo_n\in \{0,1\}^M, n\in\{1,2,\ldots, N\}$ represent the movies rated by user $n$, where $o_{nm}=1$ if user $n$ has rated movie $m$, and $o_{nm}=0$ otherwise.
Similarly, let $\bp_m \in\{0,1\}^{N}, m\in\{1,2,\ldots,M\}$ denotes the users who have rated  movie $m$, with $p_{mn}=1$ if the movie $m$ has been rated by user $n$, and $p_{mn}=0$ otherwise.
Then, Equation~\eqref{equation:hada_rkone1} and~\eqref{equation:hada_rkone2} become
\begin{align}
	\nabla L(\bd_{1,n}) 
	&=2\bC_1[\bo_n,:]^\top \left(\left[(\bC_1[\bo_n,:]\bd_{1,n}) \hadaprod \ba_{2,n}[\bo_n]-\ba_n[\bo_n] \right] \hadaprod\ba_{2,n}[\bo_n]\right),\nonumber \\
	&\gap\gap\gap\gap\gap\gap\gap\gap\gap\gap\gap\gap\gap\gap\gap\gap n\in\{1,2,\ldots, N\};\label{equation:hada_rkone3} \\
	\nabla L(\bc_{1,m}) 
	&= 2\bD_1[:,\bp_m] \left(\left[(\bD_1[:,\bp_m]^\top\bc_{1,m}) \hadaprod \bb_{2,m}[\bp_m]-\bb_m[\bp_m] \right] \hadaprod\bb_{2,m}[\bp_m]\right),\nonumber\\
	&\gap\gap\gap\gap\gap\gap\gap\gap\gap\gap\gap\gap\gap\gap\gap\gap  m\in\{1,2,\ldots, M\} \label{equation:hada_rkone4}.
\end{align}
Since the Hadamard product commutes, due to symmetry, the gradient of $L(\bd_{2,n}), n\{1,2,\ldots, N\}$ and $\bc_{2,m},  m\in\{1,2,\ldots, M\}$ can be obtained similarly. 
The process for predicting the missing entries in $\bA$ is then formulated in Algorithm~\ref{alg:ad_hadamad_missen}.

\begin{algorithm}[h] 
\caption{Alternating Descent with Gradient Descent for Hadamard Decomposition with Missing Entries: A regularization can also be added into the gradient descent update (see Section~\ref{section:regularization-extention-general}).}
\label{alg:ad_hadamad_missen}
\begin{algorithmic}[1] 
\Require Matrix $\bA\in \real^{M\times N}$;
\State Initialize $\bC_1,\bC_2\in \real^{M\times K}$, and $\bD_1,\bD_2\in \real^{K\times N}$; 
\State Choose a stop criterion on the approximation error $\delta$;
\State Choose  step size $\eta$;
\State Choose the maximal number of iterations $C$;
\State $iter=0$; \Comment{Count for the number of iterations}
\While{$\norm{(\bC_1\bD_1)\hadaprod (\bC_2\bD_2) -\bA}_F^2>\delta $ and $iter<C$}
\State $iter=iter+1$; 
\For{$n=1,2,\ldots, N$}
\State $\bd_{1,n}=\bd_{1,n}-\eta \nabla L(\bd_{1,n}) $;  \Comment{Equation~\eqref{equation:hada_rkone3}}
\State $\bd_{2,n}=\bd_{2,n}-\eta \nabla L(\bd_{2,n}) $;
\EndFor
\For{$m=1,2,\ldots, M$}
\State $\bc_{1,m}=\bc_{1,m}-\eta \nabla L(\bc_{1,m})  $; \Comment{Equation~\eqref{equation:hada_rkone4}}
\State $\bc_{2,m}=\bc_{2,m}-\eta \nabla L(\bc_{2,m})  $;
\EndFor
\EndWhile
\State Output $\bC_1,\bD_1, \bC_2,\bD_2$;
\end{algorithmic} 
\end{algorithm}

\section{Low-Rank Kronecker Decomposition}\label{section:low_rank_kronecker}

Similarly to the low-rank Hadamard decomposition, we can also consider the low-rank Kronecker decomposition.
Suppose the design matrix $\bA\in\real^{M\times N}$ has dimensions such that $M=m_1m_2$ and $N=n_1n_2$. 
Consider two matrices $\bB\in\real^{m_1\times n_1}$ and $\bC\in\real^{m_2\times n_2}$ such that $\bB\otimes \bC\in\real^{M\times N}$.
Theorem~\ref{theorem:rank_kronec_prod} shows that $\rank(\bB\otimes \bC)=\rank(\bB)\rank(\bC)$.
The goal then becomes 
\begin{itemize}
	\item Given a real matrix $\bA\in \real^{M\times N}$, find  matrix factors $\bB\in\real^{m_1\times n_1}$ and $\bC\in\real^{m_2\times n_2}$ such that: 
	$$
	\min\,\,L(\bB, \bC) = \norm{\bB\otimes \bC-\bA}_F^2.
	$$
\end{itemize}

Given the definition of the Kronecker product $\bB\otimes \bC$ (Definition~\ref{definition:kronecker-product}), we may consider the \textit{uniform blocking} of matrix $\bA$:
$$
\bA=
\begin{bmatrix}
	\bA_{11} &\bA_{12} & \ldots &\bA_{1,n_1} \\
	\bA_{21} &\bA_{22} & \ldots &\bA_{2,n_1} \\
	\vdots & \vdots & \ddots & \vdots \\
	\bA_{m_1,1} &\bA_{m_1,2} & \ldots &\bA_{m_1,n_1} \\
\end{bmatrix}, 
\gap 
\bA_{ij}\in\real^{m_2\times n_2},
$$
where the $(i,j)$-th block is an $m_2\times n_2$ matrix with $\bA_{ij}=\bA[(i-1)m_2+1:i\cdot m_2, (j-1)n_2+1:j\cdot n_2]$.
While the $(i,j)$-th block of $\bB\otimes \bC$ is $b_{ij}\bC$. 
Therefore, when $\bC$ is held constant,  the objective function with respect to $\bB$ (or $b_{ij}$ for $i\in\{1,2,\ldots,m_1\}, j\in\{1,2,\ldots,n_1\}$) is 
$$
L(\bB) = \sum_{i=1}^{m_1}\sum_{j=1}^{n_1} \norm{\bA_{ij} - b_{ij}\bC}_F^2.
$$
Analogously, when keeping $\bB$ fixed, the objective function with respect to $\bC$ is 
$$
L(\bC) =\sum_{i=1}^{m_2}\sum_{j=1}^{n_2} \norm{\widetildebA_{ij}- c_{ij}\bB}_F^2,
$$
where $\widetildebA_{ij}=\bA[i:m_2:M, j:n_2:N]$, i.e., slicing the row every $m_2$ indices and the column every $n_2$ indices; the row indices of $\widetildebA_{ij}$ are $i, i+m_2, i+2m_2, \ldots$, and the column indices are $j, j+n_2, j+2n_2,\ldots$.

\subsection{Kronecker Decomposition via Alternating Least Squares}
Thinking at the block level for matrices can allow us to solve the least squares problem for each block independently.
When working with large matrices, especially those that can be naturally divided into smaller blocks, it is often beneficial to think at the block level. It enables us to determine least squares solutions for each block separately, which can greatly simplify the overall optimization process.
\begin{lemma}[Kronecker Decomposition via Least Squares \citep{van1993approximation}]\label{lemma:lrank_kro_opt}
	Suppose $\bA\in \real^{M\times N}$ with $M=m_1m_2$ and $N=n_1n_2$. If $\bC\in\real^{m_2\times n_2}$ is fixed, then the matrix $\bB$ defined by 
	\begin{equation}\label{equation:kro_ls_1}
		b_{ij} =\frac{\trace(\bA_{ij}^\top \bC)}{\trace(\bC^\top\bC)}, \gap i\in\{1,2,\ldots, m_1\}, j\in\{1,2,\ldots, n_1\}
		\footnote{The trace operator considers only the diagonal elements of a matrix. Therefore, in practice, it's more appropriate to compute the Hadamard product of the two matrices and then sum all the squared elements.}
	\end{equation}
	minimizes $\norm{\bB\otimes \bC-\bA}_F$, where $\bA_{ij}=\bA[(i-1)m_2+1:i\cdot m_2, (j-1)n_2+1:j\cdot n_2]$ is $(i,j)$-th block of $\bA$ with size $m_2\times n_2$.
	Analogously, if $\bB\in\real^{m_1\times n_1}$ is fixed, then the matrix $\bC$ defined by 
	\begin{equation}\label{equation:kro_ls_2}
		c_{ij} =\frac{\trace(\widetilde{\bA}_{ij}^\top\bB)}{\trace(\bB^\top\bB)}, \gap i\in\{1,2,\ldots, m_2\}, j\in\{1,2,\ldots, n_2\}
	\end{equation}
	minimizes $\norm{\bB\otimes \bC-\bA}_F$, where $\widetilde{\bA}_{ij}=\bA[i:m_2:M, j:n_2:N]$.
\end{lemma}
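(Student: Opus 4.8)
The whole statement reduces to a single elementary fact plus careful index bookkeeping: if we slice both $\bA$ and $\bB\otimes\bC$ in a compatible way, the squared Frobenius norm $\normF{\bB\otimes\bC-\bA}^2$ splits into a sum of independent scalar least-squares problems, one per scalar entry of the free factor. The plan is therefore to (i) establish the block decomposition of the objective, (ii) solve the scalar problem, and (iii) repeat the argument with the ``interleaved'' blocking for the second part.

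First I would record the key decomposition for the case where $\bC$ is fixed. By Definition~\ref{definition:kronecker-product}, the $(i,j)$-th $m_2\times n_2$ block of $\bB\otimes\bC$ is exactly $b_{ij}\bC$, and this block sits in the same rows $(i-1)m_2+1:i m_2$ and columns $(j-1)n_2+1:j n_2$ as the block $\bA_{ij}$ in the uniform blocking of $\bA$. Since the Frobenius norm squared of a matrix is the sum of the Frobenius norms squared of the blocks of any fixed partition into disjoint, exhaustive submatrices, we get
$$
\normF{\bB\otimes\bC-\bA}^2 = \sum_{i=1}^{m_1}\sum_{j=1}^{n_1}\normF{b_{ij}\bC-\bA_{ij}}^2 .
$$
Each summand depends on a single scalar $b_{ij}$, so the objective is separable and we may minimize each term independently.

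Next I would solve the scalar least-squares problem: for a fixed nonzero $\bC$, the function $g(t)=\normF{t\bC-\bA_{ij}}^2 = t^2\trace(\bC^\top\bC) - 2t\,\trace(\bA_{ij}^\top\bC) + \trace(\bA_{ij}^\top\bA_{ij})$ is a convex quadratic in $t$ with positive leading coefficient $\trace(\bC^\top\bC)>0$, so its unique global minimizer is the root of $g'(t)=0$, namely $t^\star = \trace(\bA_{ij}^\top\bC)/\trace(\bC^\top\bC)$, which is precisely the formula in Equation~\eqref{equation:kro_ls_1}. (If $\bC=\bzero$ the objective is constant and any $\bB$ works, so the statement is vacuously fine; I would mention this edge case.) Assembling the optimal $b_{ij}$ over all $i,j$ gives the claimed minimizer of $\normF{\bB\otimes\bC-\bA}$.

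For the second part, the only new ingredient is identifying the right partition when $\bB$ is fixed. I would observe that the $((p-1)m_2+r,\,(q-1)n_2+s)$ entry of $\bB\otimes\bC$ equals $b_{pq}c_{rs}$, so if we instead collect, for a fixed ``inner'' index pair $(r,s)$ with $r\in\{1,\ldots,m_2\}$, $s\in\{1,\ldots,n_2\}$, the entries at rows $r,r+m_2,r+2m_2,\ldots$ and columns $s,s+n_2,s+2n_2,\ldots$, we obtain exactly the submatrix $c_{rs}\bB$; and the matching strided slice of $\bA$ is $\widetildebA_{rs}=\bA[r:m_2:M,\;s:n_2:N]$. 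These strided slices again form a disjoint, exhaustive partition of the entries of both matrices, hence $\normF{\bB\otimes\bC-\bA}^2=\sum_{r,s}\normF{c_{rs}\bB-\widetildebA_{rs}}^2$, and the same scalar argument yields $c_{rs}=\trace(\widetildebA_{rs}^\top\bB)/\trace(\bB^\top\bB)$, matching Equation~\eqref{equation:kro_ls_2}. The main obstacle is purely the indexing verification in this last step — confirming that the strided slices correspond precisely to the $c_{rs}\bB$ blocks of $\bB\otimes\bC$ and that they tile $\bA$ without overlap or omission; once that bijection between entry positions is pinned down, everything else is the routine one-variable quadratic minimization already used in the first part.
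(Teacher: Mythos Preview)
Your proposal is correct and follows essentially the same approach as the paper: decompose $\normF{\bB\otimes\bC-\bA}^2$ into a sum of block-wise terms $\normF{b_{ij}\bC-\bA_{ij}}^2$, then minimize each scalar quadratic separately by setting its derivative to zero. You are somewhat more careful than the paper (explicitly noting separability, convexity via the positive leading coefficient, the $\bC=\bzero$ edge case, and spelling out the strided-index bijection for the second part), but the argument is the same.
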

\begin{proof}[of Lemma~\ref{lemma:lrank_kro_opt}]
	Consider the  loss of the $(i,j)$-th block:
	$$
	\norm{\bA_{ij} - b_{ij}\bC}_F^2 = \trace\left((\bA_{ij} - b_{ij}\bC)^\top(\bA_{ij} - b_{ij}\bC)\right)
	=\norm{\bA_{ij}}_F^2 - 2b_{ij}\trace(\bC^\top\bA_{ij}) + b_{ij}^2 \norm{\bC}_F^2.
	$$
	Therefore, it follows that 
	$$
	\frac{\partial L(\bB, \bC)}{\partial b_{ij}} 
	= -2\trace(\bC^\top\bA_{ij})+2b_{ij}\norm{\bC}_F^2.
	$$
	Setting the partial derivatives to zero finds the result.
	The second part follows similarly.
\end{proof}
This approach leverages the fact that the least squares solution can be found separately for each block, simplifying the calculations and potentially leading to faster convergence. It is especially effective in cases where the matrix $\bA$ displays a structured pattern that can be leveraged through block-wise processing.
The lemma suggests an alternating descent update for obtaining the decomposition which iteratively improve $\bB$ and $\bC$.
The low-rank Kronecker decomposition has a closed-form for each update, and the procedure is outlined  in Algorithm~\ref{alg:ad_Kronecker_zerograd}.

\begin{algorithm}[h] 
\caption{Alternating Descent for Low-Rank Kronecker Decomposition}
\label{alg:ad_Kronecker_zerograd}
\begin{algorithmic}[1] 
\Require Matrix $\bA\in \real^{M\times N}$ with $M=m_1m_2$ and $N=n_1n_2$;
\State Initialize $\bB\in\real^{m_1\times n_1}$ and $\bC\in\real^{m_2\times n_2}$; 
\State Choose a stop criterion on the approximation error $\delta$;
\State Choose the maximal number of iterations $C$;
\State $iter=0$; \Comment{Count for the number of iterations}
\While{$\norm{\bB\otimes \bC -\bA}_F^2>\delta $ and $iter<C$}
\State $iter=iter+1$; 
\State $C_1 = \trace(\bC^\top\bC)$
\For{$i=1,2,\ldots,m_1, j=1,2,\ldots,n_1$}
\State $b_{ij} = \frac{\trace(\bA_{ij}^\top \bC)}{C_1}$;
\EndFor
\State $C_2 = \trace(\bB^\top\bB)$
\For{$i=1,2,\ldots,m_2, j=1,2,\ldots,n_2$}
\State $c_{ij} = \frac{\trace(\widetilde{\bA}_{ij}^\top\bB)}{C_2}$;
\EndFor
\EndWhile
\State Output $\bB$ and $\bC$;
\end{algorithmic} 
\end{algorithm}

\subsection{Missing Entries}
The least squares framework for low-rank Kronecker decomposition can be effectively applied in the Netflix context.
To do this, we introduce an additional mask matrix $\bM\in\{0,1\}^{M\times N}$ where $m_{mn}\in\{0,1\}$ indicates whether the ``user" $n$ has rated the ``movie" $m$ or not. 
As a result, the update for $b_{ij}$ in Equation~\eqref{equation:kro_ls_1} can be computed as  
$$
b_{ij} =\frac{\norm{\bA_{ij} \hadaprod \bC \hadaprod \bM_{ij}}_F^2}{\norm{\bC \hadaprod \bC \hadaprod \bM_{ij}}_F^2}, \gap i\in\{1,2,\ldots, m_1\}, j\in\{1,2,\ldots, n_1\},
$$
where $\bM_{ij}=\bM[(i-1)m_2+1:i\cdot m_2, (j-1)n_2+1:j\cdot n_2]$.
Similarly, the update of $c_{ij}$ in Equation~\eqref{equation:kro_ls_2} can be expressed as:
$$
c_{ij} =\frac{ \norm{\widetilde{\bA}_{ij}\hadaprod\bB\hadaprod \widetildebM_{ij}}_F^2 }{\norm{\bB\hadaprod \bB \hadaprod \widetildebM_{ij}}_F^2}, \gap i\in\{1,2,\ldots, m_2\}, j\in\{1,2,\ldots, n_2\}
$$
where $\widetildebM_{ij}=\bM[i:m_2:M, j:n_2:N]$.

\section{Low-Rank Khatri-Rao Decomposition}\label{section:lrank_khatri_decom}
Since the Khatri-Rao product is closely related to the Kronecker product, it is particularly suited for matrices with column-wise partitioned structures.
In a manner similar to the low-rank Kronecker decomposition, we can also consider the low-rank Khatri-Rao decomposition.
Suppose the design matrix $\bA\in\real^{M\times N}$ has dimensions such that  $M=m_1m_2$. 
Consider two matrices $\bB\in\real^{m_1\times N}$ and $\bC\in\real^{m_2\times N}$ such that their Khatri-Rao product $\bB\odot  \bC\in\real^{M\times N}$ is defined, having the same shape with $\bA$.
Theorem~\ref{theorem:rank_khatri_prod} and \ref{theorem:kkrank_khatri_prod} show that $\rank(\bB\odot  \bC)\geq \max\{\rank(\bB), \rank(\bC)\}$ and $\rank_k(\bB\odot  \bC)\geq \min\{\rank_k(\bB)+\rank_k(\bC)-1, N\}$, indicating the Khatri-Rao product of low-rank matrices $\bB$ and $\bC$ can have more complex structured patterns.
The goal of low-rank Khatri-Rao decomposition then becomes 
\begin{itemize}
\item Given a real matrix $\bA\in \real^{M\times N}$, find  matrix factors $\bB\in\real^{m_1\times N}$ and $\bC\in\real^{m_2\times N}$ such that: 
$$
\min\,\,L(\bB, \bC) = \norm{\bB\odot  \bC-\bA}_F^2.
$$
\end{itemize}

\begin{lemma}[Khatri-Rao Decomposition via Least Squares]\label{lemma:lrank_khatri_opt}
Suppose $\bA\in \real^{M\times N}$ with $M=m_1m_2$. If $\bC\in\real^{m_2\times N}$ is fixed, then the matrix $\bB$ defined by 
\begin{equation}\label{equation:khatri_opt_1}
	b_{ij} =\frac{\bc_j^\top\ba_{ij}}{\bc_j^\top \bc_j}, \gap i\in\{1,2,\ldots, m_1\}, j\in\{1,2,\ldots, N\}
\end{equation}
minimizes $\norm{\bB\odot  \bC-\bA}_F$, where $\ba_{ij}=\bA[(i-1)m_2+1:i\cdot m_2, j]$ is $(i,j)$-th block of $\bA$ with size $m_2\times 1$.
Analogously, if $\bB\in\real^{m_1\times N}$ is fixed, then the matrix $\bC$ defined by 
\begin{equation}\label{equation:khatri_opt_2}
	c_{ij} =\frac{\bb_j^\top\widetilde{\ba}_{ij} }{\bb_j^\top\bb_j}, \gap i\in\{1,2,\ldots, m_2\}, j\in\{1,2,\ldots, N\}
\end{equation}
minimizes $\norm{\bB\odot \bC-\bA}_F$, where $\widetilde{\ba}_{ij}=\bA[i:m_2:M, j]\in\real^{m_1\times 1}$.
\end{lemma}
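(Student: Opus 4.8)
The plan is to exploit the column-wise structure of the Khatri-Rao product so that the objective $\norm{\bB\odot\bC-\bA}_F^2$ breaks up into a large family of independent \emph{scalar} least squares problems, each with a one-line closed-form solution. Recall from Definition~\ref{definition:khatri-rao-product} that the $j$-th column of $\bB\odot\bC$ is $\bb_j\otimes\bc_j$, so by the column-wise additivity of the squared Frobenius norm,
$$
\norm{\bB\odot\bC-\bA}_F^2 = \sum_{j=1}^{N}\norm{\bb_j\otimes\bc_j - \ba_j}_2^2,
$$
where $\ba_j$ is the $j$-th column of $\bA$. First I would note that $\bb_j\otimes\bc_j$ stacks the $m_1$ length-$m_2$ blocks $b_{1j}\bc_j,\, b_{2j}\bc_j,\, \ldots,\, b_{m_1,j}\bc_j$ on top of one another, while the matching partition of $\ba_j$ into $m_1$ contiguous blocks of length $m_2$ is exactly $(\ba_{1j},\ba_{2j},\ldots,\ba_{m_1,j})$ with $\ba_{ij}=\bA[(i-1)m_2+1:i\cdot m_2,\, j]$. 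Hence the objective decouples completely,
$$
\norm{\bB\odot\bC-\bA}_F^2 = \sum_{j=1}^{N}\sum_{i=1}^{m_1}\norm{b_{ij}\bc_j - \ba_{ij}}_2^2,
$$
and each summand depends on exactly one entry $b_{ij}$ of $\bB$.

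Second, I would dispatch each scalar problem $\min_{b_{ij}\in\real}\norm{b_{ij}\bc_j-\ba_{ij}}_2^2$ by expanding the square to $b_{ij}^2\,\bc_j^\top\bc_j - 2b_{ij}\,\bc_j^\top\ba_{ij} + \ba_{ij}^\top\ba_{ij}$, differentiating in $b_{ij}$, and setting the derivative to zero. This yields $b_{ij}=\bc_j^\top\ba_{ij}/(\bc_j^\top\bc_j)$, and since the second derivative $2\,\bc_j^\top\bc_j>0$ whenever $\bc_j\neq\bzero$, this stationary point is the unique global minimizer of that term. Assembling these entrywise minimizers over all $i,j$ produces precisely the matrix $\bB$ of Equation~\eqref{equation:khatri_opt_1}, and since the terms are independent, this matrix minimizes the full objective. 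This proves the first claim.

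For the second claim I would run the same argument with the roles of the two factors interchanged; the only wrinkle is that the sub-blocks of $\ba_j$ that now come into play are \emph{strided} rather than contiguous. Concretely, writing the entry of $\bb_j\otimes\bc_j$ in position $(k-1)m_2+l$ as $b_{kj}c_{lj}$ for $k\in\{1,\ldots,m_1\}$ and $l\in\{1,\ldots,m_2\}$, then fixing $l$ and letting $k$ range picks out the vector $c_{lj}\bb_j$, whose matching entries of $\ba_j$ form $\widetilde{\ba}_{lj}=\bA[l:m_2:M,\, j]\in\real^{m_1}$. Thus $\norm{\bB\odot\bC-\bA}_F^2=\sum_{j=1}^{N}\sum_{l=1}^{m_2}\norm{c_{lj}\bb_j-\widetilde{\ba}_{lj}}_2^2$, and the identical scalar computation gives $c_{lj}=\bb_j^\top\widetilde{\ba}_{lj}/(\bb_j^\top\bb_j)$, which is Equation~\eqref{equation:khatri_opt_2}. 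The main obstacle is not analytic but bookkeeping: getting the index map between the Kronecker/Khatri-Rao layout and the contiguous-versus-strided block extraction exactly right. One should also note the harmless degenerate case $\bc_j=\bzero$ (resp.\ $\bb_j=\bzero$), in which every scalar value is optimal for that column and the quotient is read by convention as $0$.
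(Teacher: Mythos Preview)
Your proposal is correct and follows essentially the same approach as the paper: decompose the Frobenius objective into independent block-level terms $\norm{\ba_{ij}-b_{ij}\bc_j}_2^2$, expand each as a quadratic in $b_{ij}$, and read off the minimizer. The paper's own proof is in fact terser than yours---it jumps directly to the $(i,j)$-th block loss and says ``the second part follows similarly''---whereas you spell out the column-wise Khatri-Rao structure, the contiguous-versus-strided indexing for the two cases, and the degenerate $\bc_j=\bzero$ case; all of this is correct and welcome extra detail, not a different method.
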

\begin{proof}[of Lemma~\ref{lemma:lrank_khatri_opt}]
Consider the  loss of the $(i,j)$-th block:
$$
\norm{\ba_{ij} - b_{ij}\bc_j}_2^2 = b_{ij}^2\bc_j^\top\bc_j - 2\bc_j^\top\ba_{ij} b_{ij} + \ba_{ij}^\top\ba_{ij}.
$$
Therefore, $b_{ij} =\frac{\bc_j^\top\ba_{ij}}{\bc_j^\top \bc_j}$ obtains the minimum.
The second part follows similarly.
\end{proof}

The Khatri-Rao decomposition is quite flexible, allowing for the addition of more components in the Khatri-Rao product. This flexibility enables us to extend the basic Khatri-Rao decomposition to incorporate multiple factors, which can be useful in various applications where the data exhibit complex interactions, making it suitable for applications where multiple sources of variation need to be accounted for. The \textit{cascaded Khatri-Rao decomposition} can be formulated as follows:
Given a real matrix $\bA\in\real^{M\times N}$, we aim to find matrix factors $\bB\in\real^{m_1\times N}$, $\bW\in\real^{n\times N}$, $\bC\in\real^{m_2\times N}$, and potentially additional factors, such that: 
$$
\min\,\,L(\bB, \bW, \bC, \ldots) = \norm{\bB\odot  \bW\odot \bC-\bA}_F^2.
$$
\begin{lemma}[Cascaded Khatri-Rao Decomposition via Least Squares]\label{lemma:casc_lrank_khatri_opt}
Suppose $\bA\in \real^{M\times N}$ with $M=m_1\cdot\textcolor{mylightbluetext}{n}\cdot m_2$. If $\textcolor{mylightbluetext}{\bW\in\real^{n\times N}}$ and $\bC\in\real^{m_2\times N}$ are fixed, then the matrix $\bB$ defined by 
\begin{equation}\label{equation:cas_khatri_opt_1}
b_{ij} =\frac{\textcolor{mylightbluetext}{\widetildebc_j}^\top\ba_{ij}}{\textcolor{mylightbluetext}{\widetildebc_j}^\top \textcolor{mylightbluetext}{\widetildebc_j}}, \gap i\in\{1,2,\ldots, m_1\}, j\in\{1,2,\ldots, N\}
\end{equation}
minimizes $\norm{\bB\odot \bW\odot  \bC-\bA}_F$, where $\ba_{ij}=\bA[(i-1)\textcolor{mylightbluetext}{\cdot (n\cdot m_2)}+1:i \textcolor{mylightbluetext}{\cdot (n\cdot m_2)}, j]$ is $(i,j)$-th block of $\bA$ with size $\textcolor{mylightbluetext}{ (n\cdot m_2)}\times 1$, and $\textcolor{mylightbluetext}{\widetildebc_j}$ is the $j$-th column of $\bW\odot \bC$.

Analogously, if $\textcolor{mylightbluetext}{\bW\in\real^{n\times N}}$ and $\bB\in\real^{m_1\times N}$ are fixed, then the matrix $\bC$ defined by 
\begin{equation}\label{equation:cas_khatri_opt_2}
c_{ij} =\frac{\textcolor{mylightbluetext}{\widetildebb_j}^\top\widetilde{\ba}_{ij} }{\textcolor{mylightbluetext}{\widetildebb_j}^\top\textcolor{mylightbluetext}{\widetildebb_j}}, \gap i\in\{1,2,\ldots, m_2\}, j\in\{1,2,\ldots, N\}
\end{equation}
minimizes $\norm{\bB\odot\bW\odot \bC-\bA}_F$, where $\widetilde{\ba}_{ij}=\bA[i:\textcolor{black}{ m_2}:M, j]\in\real^{\textcolor{mylightbluetext}{n\cdot m_1}\times 1}$, and $\textcolor{mylightbluetext}{\widetildebb_j}$ is the $j$-th column of $\bB\odot \bC$.
\paragraph{Obtain $\bW$. }
Up to this point, the result is the same as Lemma~\ref{lemma:lrank_khatri_opt} with careful consideration given to the dimensions.
If $\bB\in\real^{m_1\times N}$ and $\bC\in\real^{m_2\times N}$ are fixed, then the matrix $\bW$ defined by 
\begin{equation}\label{equation:cas_khatri_opt_3}
w_{ij} 	=\frac{\widetildebw_{ij}^\top\widehat{\ba}_{ij} }{\widetildebw_{ij}^\top\widetildebw_{ij}}, \gap i\in\{1,2,\ldots, n\}, j\in\{1,2,\ldots, N\}
\end{equation}	
minimizes $\norm{\bB\odot\bW\odot \bC-\bA}_F$, where 
$\widehat{\ba}_{ij} =\bA[I_s, j] \in\real^{m_1\cdot m_2\times 1}$, and 
$$
\widetildebw_{ij}=
[b_{1j}  \bc_j, b_{2j}  \bc_j, \dots b_{m_1,j}  \bc_j]^\top \in\real^{m_1\cdot m_2\times 1}.
$$
The $I_s$ is an index set that contains $m_1$ blocks with each block containing $m_2$ components:
$$
\begin{aligned}
&\gap I_s \\
&= \{(j:j+m_2-1), (j+n:j+n+m_2-1), \ldots  (j+(m_1-1)n:j+(m_1-1)n+m_2-1)\}, 
\end{aligned}
$$
\end{lemma}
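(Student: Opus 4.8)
The plan is to reduce the two outer updates to the two–factor case already settled in Lemma~\ref{lemma:lrank_khatri_opt}, and to handle the middle factor by a direct entrywise computation. Two facts carry most of the weight. First, the Khatri–Rao product is associative, so $\bB\odot\bW\odot\bC=(\bB\odot\bW)\odot\bC=\bB\odot(\bW\odot\bC)$. Second, the objective separates over columns: the $j$-th column of $\bB\odot\bW\odot\bC$ is $\bb_j\otimes\bw_j\otimes\bc_j$, hence $\norm{\bB\odot\bW\odot\bC-\bA}_F^2=\sum_{j=1}^{N}\norm{\bb_j\otimes\bw_j\otimes\bc_j-\bA[:,j]}_2^2$, each summand depending only on the $j$-th columns of the three factors.

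For the update of $\bB$ (with $\bW,\bC$ fixed) I would set $\widetildebC:=\bW\odot\bC\in\real^{(n m_2)\times N}$ and use associativity to write $\bB\odot\bW\odot\bC=\bB\odot\widetildebC$, an ordinary two–factor Khatri–Rao product; Lemma~\ref{lemma:lrank_khatri_opt} then applies directly, with $\widetildebC$ in the role of the fixed factor (its ``$m_2$'' equal to our $n m_2$), and returns Equation~\eqref{equation:cas_khatri_opt_1}, where $\widetildebc_j$ is the $j$-th column of $\bW\odot\bC$ and $\ba_{ij}$ is the $(i,j)$-th block of $\bA$ of size $(n m_2)\times 1$. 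The update of $\bC$ (with $\bW,\bB$ fixed) is the mirror image: put $\widetildebB:=\bB\odot\bW\in\real^{(m_1 n)\times N}$, write $\bB\odot\bW\odot\bC=\widetildebB\odot\bC$, and apply the second half of Lemma~\ref{lemma:lrank_khatri_opt} (the left factor's ``$m_1$'' now being $m_1 n$, so the relevant slicing of $\bA$ keeps every $m_2$-th row); this yields Equation~\eqref{equation:cas_khatri_opt_2}, with $\widetildebb_j$ the $j$-th column of the combined left factor $\bB\odot\bW$ and $\widetilde{\ba}_{ij}=\bA[i:m_2:M,j]$.

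For the middle factor $\bW$ the lumping trick is not available, so I would expand the $j$-th column entrywise according to the layout of Definition~\ref{definition:kronecker-product}: the entry of $\bb_j\otimes\bw_j\otimes\bc_j$ carrying the triple index $(p,i,q)$, with $p\in\{1,\ldots,m_1\}$, $i\in\{1,\ldots,n\}$, $q\in\{1,\ldots,m_2\}$, equals $b_{pj}\,w_{ij}\,c_{qj}$. Hence the column-$j$ objective $\sum_{p,i,q}(b_{pj}w_{ij}c_{qj}-a_{(p,i,q),j})^2$ decouples over $i$ as well, and for each pair $(i,j)$ shrinks to the scalar least-squares problem $\min_{w_{ij}}\norm{w_{ij}\,\widetildebw_{ij}-\widehat{\ba}_{ij}}_2^2$, in which $\widetildebw_{ij}=\bb_j\otimes\bc_j=[b_{1j}\bc_j;\,b_{2j}\bc_j;\,\ldots;\,b_{m_1,j}\bc_j]$ (independent of $i$) and $\widehat{\ba}_{ij}=\bA[I_s,j]$ collects exactly the entries $a_{(p,i,q),j}$, the index set $I_s$ being the one listed in the statement and read off from the chosen Kronecker ordering. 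Since $w\mapsto\norm{w\,\bv-\ba}_2^2$ is strictly convex whenever $\bv\neq\bzero$, zeroing its derivative gives the unique minimizer $w_{ij}=\widetildebw_{ij}^\top\widehat{\ba}_{ij}/(\widetildebw_{ij}^\top\widetildebw_{ij})$, i.e.\ Equation~\eqref{equation:cas_khatri_opt_3}; collecting these per-$(i,j)$ minimizers over all $i,j$ reconstitutes the global minimizer of $\norm{\bB\odot\bW\odot\bC-\bA}_F^2$ in $\bW$.

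The conceptual work here is light — associativity plus entrywise separation — so I expect the main obstacle to be purely clerical: pinning down $I_s$, i.e.\ which rows of $\bA$ get gathered into $\widehat{\ba}_{ij}$ and in which order, since this is governed entirely by the layout convention of the triple Kronecker product and is exactly where an off-by-$m_2$ error or a $p$–$i$ transposition would slip by unnoticed. I would verify the formula for $I_s$ against the degenerate shape $m_1=n=m_2=1$ and the smallest genuine shape $m_1=n=m_2=2$ before relying on it.
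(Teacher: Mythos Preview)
Your proof is correct and follows essentially the same approach as the paper's: the paper's own proof simply writes out the $j$-th column of $\bB\odot\bW\odot\bC$ explicitly as $[\,b_{1j}w_{1j}\bc_j,\ldots,b_{1j}w_{nj}\bc_j\mid b_{2j}w_{1j}\bc_j,\ldots\,]^\top$ and says the result follows by slicing and solving a scalar least-squares problem for each $w_{ij}$, while the $\bB$ and $\bC$ updates are (as the statement itself signals) deferred to Lemma~\ref{lemma:lrank_khatri_opt}. Your version is more explicit about the associativity reduction for the outer factors and the entrywise separability for the middle one, and you correctly identify $\widetildebb_j$ with the $j$-th column of $\bB\odot\bW$ (the statement's ``$\bB\odot\bC$'' is a typo, as the dimension $n\cdot m_1$ of $\widetilde{\ba}_{ij}$ confirms), but the underlying argument is identical.
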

\begin{proof}[of Lemma~\ref{lemma:casc_lrank_khatri_opt}]
The $j$-th column of $\bB\odot\bW\odot \bC$ is 
$$
\bb_i \otimes \bw_i \otimes \bc_i 
= 
[
(b_{1j} w_{1j} \bc_j, b_{1j} w_{2j} \bc_j, \dots b_{1j} w_{nj} \bc_j)\mid 
(b_{2j} w_{1j} \bc_j, b_{2j} w_{2j} \bc_j, \dots b_{2j} w_{nj} \bc_j)\mid
\ldots 
]^\top.
$$
The result follows by solving  least squares problem with slicing each $w_{ij}$ for $i\in\{1,2,\ldots, n\}, j\in\{1,2,\ldots, N\}$.
\end{proof}

With this proof at hand, it is then straightforward to extend the cascaded Khatri-Rao decomposition to more than three matrices: $\bA=\bA_1\odot \bA_2\odot \bA_3\odot \bA_4\odot\ldots$.
The update of $\bA_2$ can be obtained by letting $\bA_1:=\bB$,  $\bA_2:=\bW$, $\bA_3\odot \bA_4\odot\ldots:=\bC$ in Lemma~\ref{lemma:casc_lrank_khatri_opt}; 
and similarly,  the update of $\bA_3$ can be obtained by letting $\bA_1\odot \bA_2:=\bB$,  $\bA_3:=\bW$, $ \bA_4\odot \ldots:=\bC$ in Lemma~\ref{lemma:casc_lrank_khatri_opt}.

\section{Low-Rank Adaptation in Large Language Models}\label{sec:lora_in_llm}

Low-rank adaptation (LoRA) and its variants are techniques designed to adapt large pre-trained language models for specific tasks or domains with minimal changes to the original model's parameters.
Instead of modifying the pre-trained model weights, LoRA works by freezing the pre-trained model weights and introducing trainable rank decomposition matrices into each layer of the transformer architecture. 
This approach dramatically  reduces the number of trainable parameters and the GPU memory requirement, making it significantly more efficient than full fine-tuning.
On the other hand, LoRA and its variant  do not introduce any additional latency during inference, as the trainable matrices can be merged with the frozen weights after fine-tuning, eliminating any extra computations during inference.

Full fine-tuning, which requires retraining all the parameters of a pre-trained model, becomes increasingly impractical for most small and medium-sized companies as the size of these models grows. 
For example, \textit{GPT-3 175B} has 175 billion parameters \citep{brown2020language},  making it extremely costly in terms of computational resources and storage to deploy multiple instances of such a model, each fine-tuned for a different downstream task.
Additionally, storing a complete model checkpoint for each downstream application is inefficient for deployment and task-switching, especially with large models.

LoRA addresses these challenges by significantly reducing the number of trainable parameters, which in turn reduces the computational and memory overhead during both training and inference. For instance, compared to GPT-3 175B  fine-tuned with Adam, LoRA can reduce the number of trainable parameters by 10,000 times and the GPU memory requirement by 3 times \citep{hu2021lora}.

LoRA also enhances task-switching flexibility by allowing a pre-trained model to be shared and used for multiple tasks. 
This is achieved by freezing the shared model and efficiently switching tasks by replacing the small LoRA matrices (the matrices $\bA$ and $\bB$ in Figure~\ref{fig:lora_loha_lokr}), thereby reducing both storage requirements and task-switching overhead.

\begin{figure}[h]
	\centering
	\includegraphics[width=1.0\textwidth]{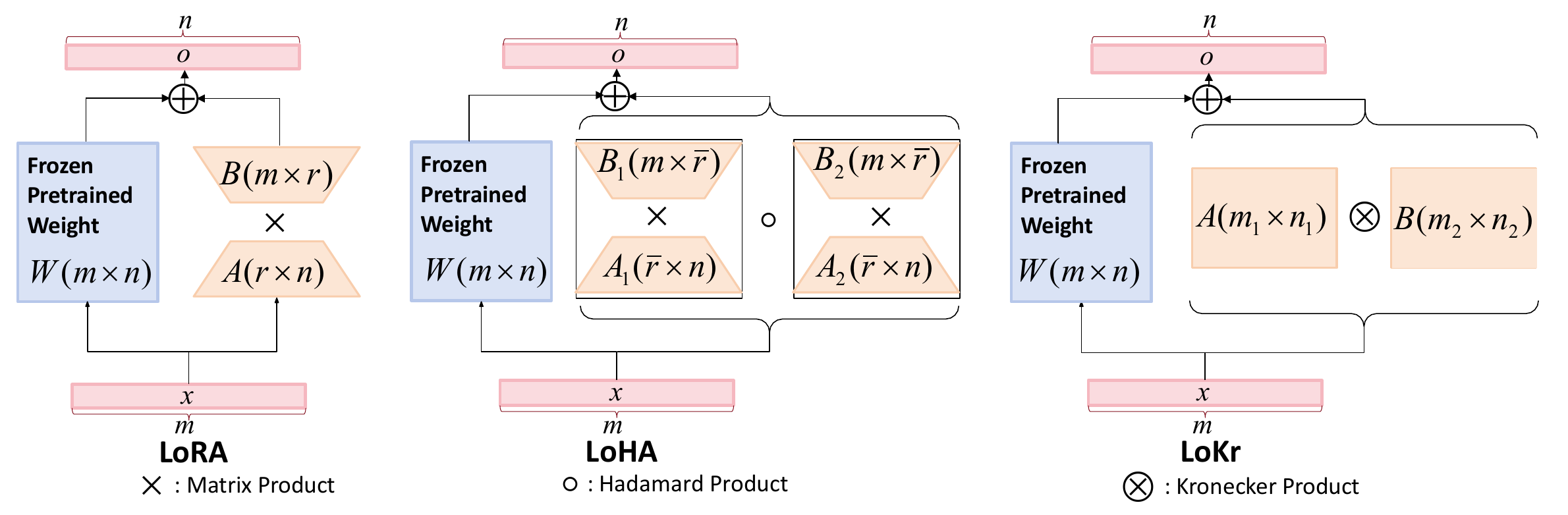}
	\caption{Diagram illustrating LoRA, LoHA, and LoKr (KronA).}
	\label{fig:lora_loha_lokr}
\end{figure}
\paragraph{LoRA.}  To be more specific, the weight update $\Delta \bW \in \real^{m\times n}$ is decomposed into two low-rank matrices $\bB \in\real^{m\times r}$ and $\bA\in\real^{r\times n}$, with $m$ and $n$ being the dimensions of the original model parameters, and $r$ being the rank of the decomposition, satisfying $r \leq  \min(m,n)$. 
During the fine-tuning phase, the pre-trained model parameter $\bW$ remains unchanged (usually called frozen) while only the low-rank matrices $\bB$ and $\bA$ are updated.
The forward pass, originally defined as $\bo = \bW \bx + \bb$, is adjusted to:
\begin{equation}\label{equation:lora_eq}
	\bo = \bW\bx+\bb +\alpha \Delta \bW\bx
	=\bW\bx+\bb +\alpha \bB\bA\bx,
\end{equation}
where $\bb$ is the bias term, and $\alpha$ is called the \textit{merging ratio}, which controls the balance between preserving the pre-trained model's information and adapting it to  new target concepts. Hence the name \textit{low-rank adaptation (LoRA)} \citep{hu2021lora}.

\paragraph{LoHA.} 
In particular, it is widely recognized that methods based on matrix factorization are limited by the low-rank constraint. 
Within the LoRA framework, weight updates are restricted to the low-rank space, which can affect the performance of the fine-tuned model. 
We hypothesize that achieving better fine-tuning performance may require a relatively higher rank, particularly when working with larger fine-tuning datasets or when there is a significant difference between the data distribution of downstream tasks and the pretraining data. 
However, this could lead to increased memory usage and greater storage requirements.
We have demonstrated in Theorem~\ref{theorem:rank_hada_prod} that the Hadamard product of two rank-$\overline{r}$ matrices $\bW_1=\bB_1\bA_1$ and $\bW_2=\bB_2\bA_3$ has a rank of at most $\overline{r}^2$.
Given that  $\bA$ and $\bB$ in Equation~\eqref{equation:lora_eq} require $(m+n)r$ floating-points numbers, if we set $\overline{r} = \frac{r}{2}$, the number of floating-point numbers remains $(m+n)r$ for the Hadamard product of $(\bB_1\bA_1)\hadaprod (\bB_2\bA_3)$, while the rank of the low-rank approximation becomes $\frac{r^2}{4}$ at most. 
This technique is frequently referred to as  \textit{LoHA (low-rank adaptation with Hadamard product)} in the field (originally proposed to address the low-rank constraint issue in federated learning problems  \citep{hyeon2021fedpara}). 
When $r>4$, LoHA can represent more complex models compared to LoRA.
The forward pass then becomes:
$$
\bo = \bW\bx+\bb +\alpha \Delta \bW\bx
=\bW\bx+\bb +\alpha \left((\bB_1\bA_1)\hadaprod (\bB_2\bA_2)\right) \bx.
$$
\paragraph{Warranty.} We have demonstrated in Section~\ref{section:low_rank_hadamard} that not all matrices can be decomposed using the Hadamard product of low-rank matrices. 
Consequently, when $\overline{r}=\frac{r}{2}$,  LoRA can explore all low-rank matrices within a space of rank $r$, whereas the LoHA method searches within an unspecified subset of the space with a rank up to $\frac{r^2}{4}$. 
The specific advantages of LoHA for federated learning or fine-tuning pre-trained LLMs have yet to be fully clarified.

\paragraph{LoKr.} 
Similarly, the low-rank adaptation can be represented by a Kronecker product of two low-rank matrices, $\bA\in\real^{m_1\times n_1}$ and $\bB\in\real^{m_2\times n_2}$, where $m=m_1m_2$ and $n=n_1n_2$. 
This approach is commonly known as \textit{LoKr (low-rank adaptation with Kronecker product, a.k.a., Kronecker adapter)} in the literature \citep{edalati2022krona, yeh2023navigating}.
The forward pass is then given by 
$$
\bo = \bW\bx+\bb +\alpha \Delta \bW\bx
=\bW\bx+\bb +\alpha \left(\bA\otimes \bB\right) \bx.
$$
Alternatively, one of the factors can be further  approximated by a low-rank matrix decomposition. For instance, consider  $\bB=\bC\bD$ with $\bC\in\real^{m_2\times k}$ and $\bD\in\real^{k\times n_2}$. In this case, the LoKr can be further represented as 
$$
\bo = \bW\bx+\bb +\alpha \Delta \bW\bx
=\bW\bx+\bb +\alpha \left(\bA\otimes (\bC\bD)\right) \bx.
$$
A comparison of the three adaptation methods is illustrated in Figure~\ref{fig:lora_loha_lokr}.

\begin{figure}[h]
	\centering
	\includegraphics[width=1.02\textwidth]{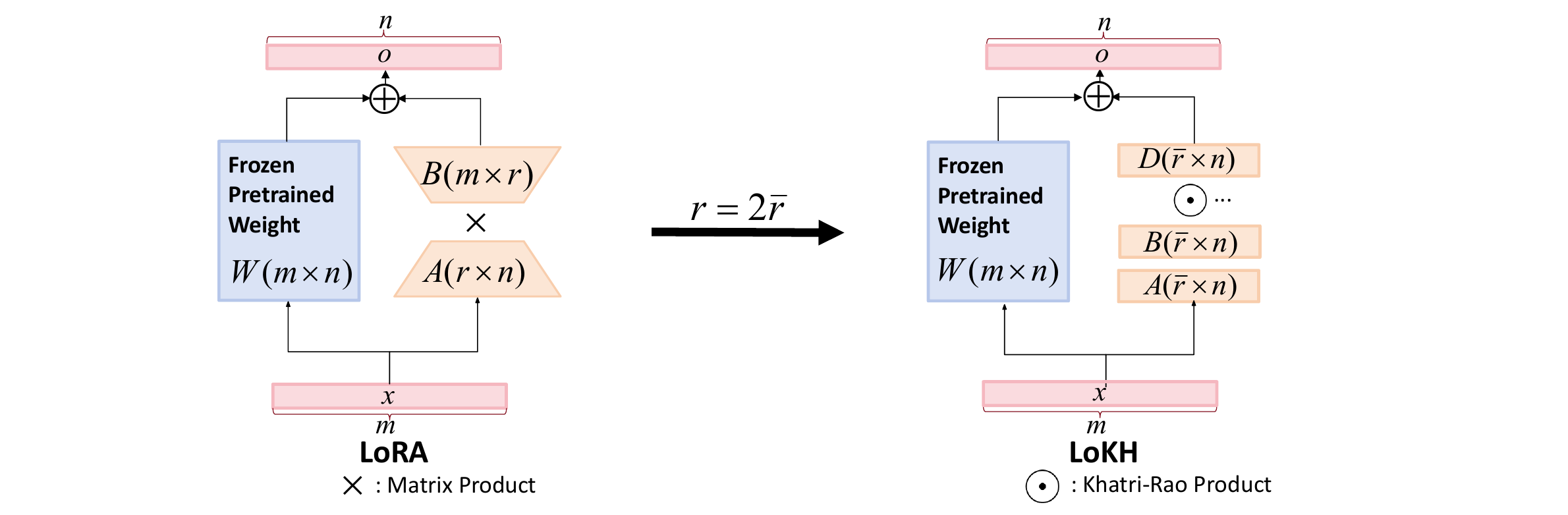}
	\caption{Diagram illustrating LoRA and LoKH.}
	\label{fig:lora_loKH}
\end{figure}

\paragraph{Insights from Khatri-Rao Products: LoKH.}
We consider a special case where $\bW\in\real^{n\times n}$ and each low-rank matrix has the same rank and $k$-rank. This is not uncommon since the matrix structures in language models are extremely complex and usually have full rank. In the context of transformer architectures, the $n$ is typically a power of $2$. As a result, there exists an integer $\overline{r}$ such that $\overline{r}^4 = n$. Let $\bA,\bB, \bC, \bD\in\real^{\overline{r}\times n}$ represent the low-rank matrices, the \textit{low-rank adaptation with Khatri-Rao product (LoKH)} can be expressed as (see Figure~\ref{fig:lora_loKH}):
$$
\bo = \bW\bx+\bb +\alpha \Delta \bW\bx
=\bW\bx+\bb +\alpha \left(\bA\odot  \bB \odot \bC\odot \bD\right) \bx.
$$
In this context, the LoRA with rank $r$ can represent models with a rank of at most $r$. However, if we explore the space where the rank is equal to the  $k$-rank, the LoKH method can identify models with a rank greater than  $4\overline{r}-3$ (Theorem~\ref{theorem:kkrank_khatri_prod}).
When $r>3$, LoKH can represent more complex models compared to LoRA. 
Therefore, LoKH (rank $2r-3$) offers a balance between LoRA (rank $r$) and LoHA (rank $\frac{r^2}{4}$).
LoHA represents a wider range of models due to its higher rank, while in a small subset in this high-rank space; LoRA offers a simpler model.
To the best of our knowledge, the LoKH method has not been explored in the fields of text-to-image models or large language models. This presents an opportunity for future research to investigate the potential benefits of LoKH in these domains, potentially leading to improved performance and efficiency in transformer-based models.
Furthermore, the LoKH method is straightforward to extend by cascading more components, allowing for the representation of more complex models: $\Delta\bW= \bA\odot \bB\odot \bC\odot \bD\odot \bE\ldots$.

Additionally, the Khatri-Rao decomposition has the advantage of aligning the columns of the weight matrix $\bW$. In the transformer structure, the attention mechanism is calculated as
$$
\text{Attention} (\bQ, \bK, \bV) = \text{softmax}\left( \frac{\bQ^\top\bK}{\sqrt{d_k}}\right)\bV^\top,
$$
where $\bQ, \bK \in\real^{d_k\times n}$ are the query and key matrices, $\bV\in\real^{ d_v\times n}$ is the value matrix, $n$ is the length of sequence (tokens for texts and patches for images), and $d_k, d_v$ are lengths of dimensions. Specifically, the attention  captures the interactions between the query, key, and value matrices in a way that emphasizes the matching of features across different tokens/patches.
The LoKH method on these matrices, on the other hand, can be seen as a technique that facilitates the matching of features for each token/patch in the sequence before sending them into the attention mechanism.
By leveraging the Khatri-Rao product, LoKH enables the model to focus on the specific features of individual tokens/patches instead of interleaving the features of different tokens/patches (before sending them into the attention mechanism), which can be beneficial for tasks requiring fine-grained attention mechanisms.

Finally,the Khatri-Rao method offers flexibility through partition-wise decomposition (Definition~\ref{definition:partition_khatri_prod}). In this case, the low-rank decomposition divides the tokens/patches into several blocks, facilitating more extensive local information exchange within each block.
The method can be made even more flexible to identify various shifted windows across the blocks and then combine two (or more) partition-wise decompositions using the Hadamard product. 
The low-rank decomposition then assumes local/group connections for some tokens/patches, while each group is not connected (akin to the group Lasso formulation \citep{yuan2006model, bach2011convex}).
This approach can be tailored to transformer architectures for computer vision, akin to the shifted windows and patch merging techniques used in the Swin Transformer \citep{liu2021swin}.

\begin{figure}[h]
	\centering
	\includegraphics[width=1.0\textwidth]{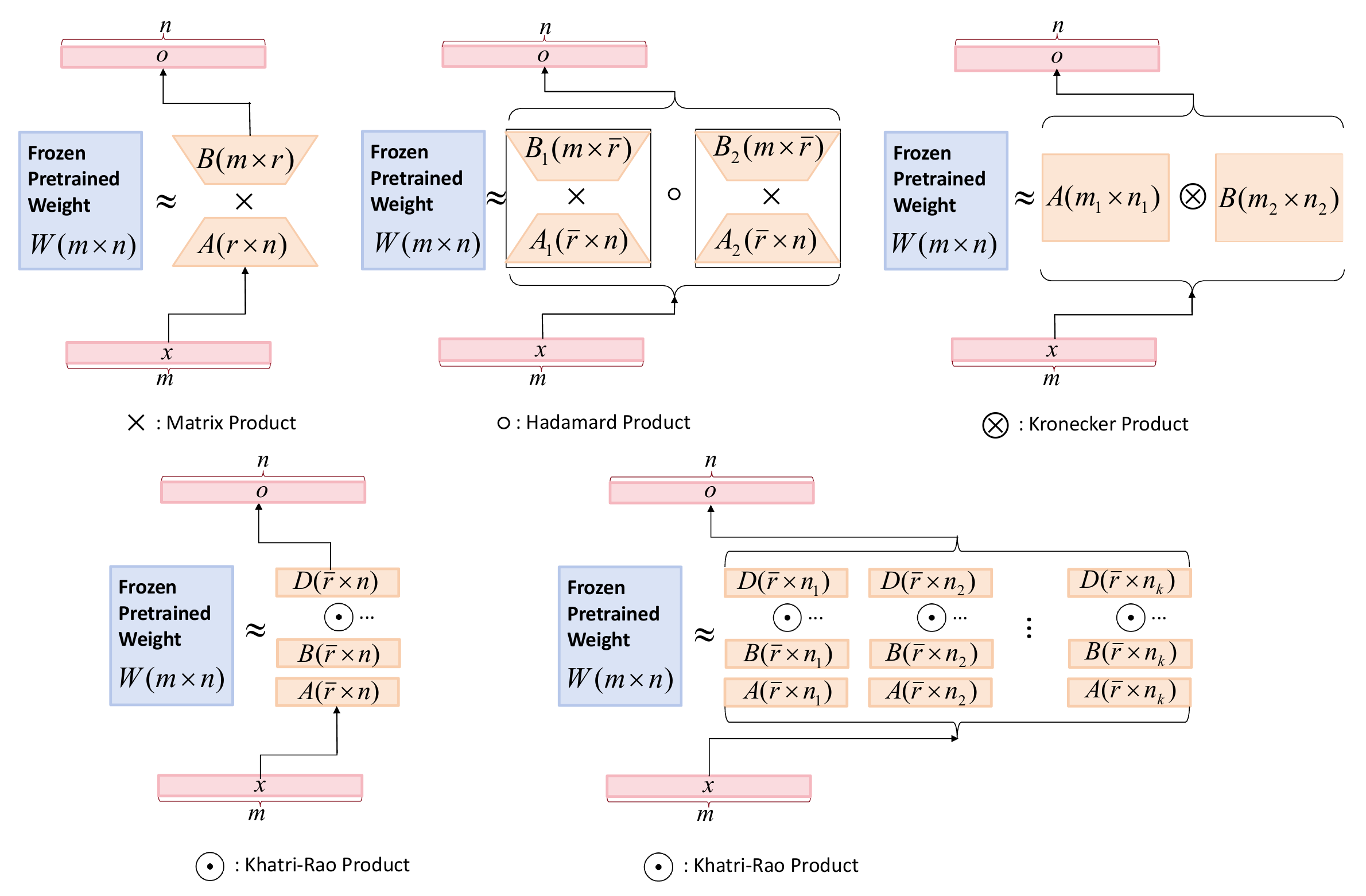}
	\caption{Diagram illustrating low-rank approximation for transformer architectures. The bottom-right figure illustrates the cascading of Khatri-Rao products such that $n=n_1n_2\ldots n_k$.}
	\label{fig:lora_loha_lokr_approx}
\end{figure}

\subsection{Low-Rank Approximation of Transformers}

Low-rank matrix decomposition methods have been successfully used to compress deep neural networks (see, for example,  Chapter 15 of \citet{lu2021numerical}). These methods aim to reduce the number of parameters in the network by approximating the weight matrices of the layers using lower rank matrices, and the network is subsequently fine-tuned. 
This reduction in complexity makes the models more suitable for resource-constrained environments, such as mobile or embedded systems.

The key idea behind these methods is that the weight matrices of neural network layers can often be well-approximated by the product of two (or more) smaller matrices. For example, consider a fully connected layer with a weight matrix $\bW\in\real^{m\times n}$. Instead of storing $\bW$ directly, we can approximate it as: $\bW\approx\bA\bB$, where $\bA\in\real^{m\times r}$ and $\bB\in\real^{r\times n}$, and $r$ is much smaller than both $m$ and $n$. This results in a significant reduction in the number of parameters needed to represent the layer, from $mn$ to  $mr+nr$.

Given  the success of methods like LoRA, LoHA, and LoKr in the field of fine-tuning pre-trained large language models, it is valuable to explore the application of low-rank decomposition methods on pre-trained transformer architectures, which have a large number of parameters due to their self-attention mechanisms and feed-forward networks. By applying low-rank approximations (e.g., ALS, low-rank Hadamard decomposition, low-rank Kronecker decomposition, and low-rank Khatri-Rao decomposition; see Figure~\ref{fig:lora_loha_lokr_approx}) to compress these transformers and then fine-tuning the compressed models for downstream tasks, we can potentially achieve substantial reductions in model size without compromising performance.
Unlike low-rank adaptation methods, the low-rank representation of transformers not only reduces the memory usage for the forward pass (during both training and inference) but also reduces those in the backward propagation, thereby enhancing the efficiency of the entire training and inference processes.

\setcitestyle{numbers}
\small
\bibliography{bib}
\bibliographystyle{iclr}

\end{document}